\definecolor{lb}{RGB}{155,206,227}
\definecolor{db}{RGB}{31,120,180}
\definecolor{lg}{RGB}{178,223,138}
\definecolor{dg}{RGB}{51,160,44}
\definecolor{darkblue}{rgb}{0,0,0.75}
\definecolor{caddendum}{RGB}{0,0,0}
\definecolor{cerratum}{RGB}{0,0,0}
\definecolor{c-lss-1}{RGB}{28,144,153}
\definecolor{c-lss}{RGB}{255,0,0}
\definecolor{c-dpp}{RGB}{127,121,73}
\definecolor{c-dpp-1}{RGB}{127,121,73}
\definecolor{c-km}{RGB}{117,107,177}
\definecolor{lloyd-km}{RGB}{241,163,64}
\newtheorem{theorem}{Theorem}
\newtheorem*{theorem*}{Theorem}
\DeclareMathOperator*{\argmax}{arg\,max}
\DeclareMathOperator*{\kl}{\mathrm{KL}}
\newcommand{\D}[1]{\,\mathrm{d}{#1}}
\newcommand{\brackets}[1]{\left( {#1} \right)}
\newcommand{\Brackets}[1]{\Big( {#1} \Big)}
\newcommand{\cbrackets}[1]{\left\{ {#1} \right\} }
\newcommand{\sbrackets}[1]{\left[ {#1} \right]}
\newcommand{\SBrackets}[1]{\Big[ {#1} \Big]}
\newcommand{\norm}[1]{\left\Vert {#1} \right\Vert}
\newcommand{\absolute}[1]{\left\vert {#1} \right\vert}
\newcommand{\ceq}[1]{(\ref{#1})}
\newcommand{\acro}[1]{\textsc{\MakeLowercase{#1}}}
\newlength\figureheight
\newlength\figurewidth
\pgfplotsset{compat=newest}
\pgfplotsset{
	tick label style={font=\small},
	label style={font=\small},
	legend style={font=\small},
	every axis/.append style={
		thick,
		tick style={semithick, black},
		axis line style={-},
		axis x line =bottom,
		axis y line =left
	}
}
\begin{document}
%
\title{Learning Waveform-Based Acoustic Models using Deep Variational Convolutional Neural Networks}
%
%
%

\author{Dino~Oglic,
        Zoran~Cvetkovic,
        and~Peter~Sollich
\thanks{D. Oglic and Z.\ Cvetkovic are with the Department of Engineering, King's College London. Correspondence to: $\texttt{dino.oglic@uni-bonn.de}$.}
\thanks{P.\ Sollich is with the Department of Mathematics, King's College London, and the Institute for Theoretical Physics, University of G\"ottingen.}
}

%
%

\markboth{ IEEE/ACM Transactions on Audio, Speech, and Language Processing}%
{Shell \MakeLowercase{\textit{et al.}}: Bare Demo of IEEEtran.cls for IEEE Journals}
%

\IEEEpubid{2329-9290~\copyright~2021~IEEE. Citation information: \href{https://ieeexplore.ieee.org/document/9511850}{DOI~10.1109/TASLP.2021.3104193}, IEEE/ACM Transactions on Audio, Speech, and Language Processing.}


\maketitle
\IEEEpeerreviewmaketitle

\begin{abstract}
We investigate the potential of stochastic neural networks for learning effective waveform-based acoustic models. The waveform-based setting, inherent to fully end-to-end speech recognition systems, is motivated by several comparative studies of automatic and human speech recognition that associate standard non-adaptive feature extraction techniques with information loss, which can adversely affect robustness. Stochastic neural networks, on the other hand, are a class of models capable of incorporating rich regularization mechanisms into the learning process. We consider a deep convolutional neural network that first decomposes speech into frequency sub-bands via an adaptive parametric convolutional block where filters are specified by cosine modulations of compactly supported windows. The network then employs standard non-parametric \acro{1d} convolutions to extract relevant spectro-temporal patterns while gradually compressing the structured high dimensional representation generated by the parametric block. We rely on a probabilistic parametrization of the proposed neural architecture and learn the model using stochastic variational inference. This requires evaluation of an analytically intractable integral defining the Kullback--Leibler divergence term responsible for regularization, for which we propose an effective approximation based on the Gauss--Hermite quadrature. Our empirical results demonstrate a superior performance of the proposed approach over comparable waveform-based baselines and indicate that it could lead to robustness. Moreover, the approach outperforms a recently proposed deep convolutional neural network for learning of robust acoustic models with standard \acro{fbank} features.
\end{abstract}

\begin{IEEEkeywords}
Convolutional neural networks, parametric filters, variational inference, waveform-based speech recognition.
\end{IEEEkeywords}

%
\IEEEpeerreviewmaketitle

\section{Introduction}
\label{sec:intro}

Automatic speech recognition systems typically operate in low-dimensional feature spaces designed to achieve invariances inherent to speech production and human speech recognition~\cite{li12,plos,tuske2018}. Log Mel-filter bank values (\acro{fbank}) and their de-correlated variant known as Mel-frequency cepstral coefficients (\acro{mfcc}) are two of the most frequently used feature extraction techniques of this kind~\cite[][]{bridle1974experimental,mfcc1980}. Several comparative studies of automatic and human speech recognition~\cite{alsteris,meyer07,petere99} suggest that the information loss inherent to such feature extraction techniques can adversely affect robustness to standard environmental distortions arising from additive and channel (linear filtering) noise~\cite{Ager11,Yousafzai11a}. Motivated by this, we propose an effective and principled approach for learning of robust acoustic models in the waveform domain. A difficulty in the waveform setting is the sheer size of the training data required for learning effective waveform-based models. More specifically, the requirement for more than $2,000$ hours of speech in~\cite[][]{sainath2015,zhu2016} translates into weeks of training on a typical device with \acro{gpu} support. 
Our aim is to tackle this problem by incorporating relevant inductive bias into the learning process and allow for learning of effective waveform-based acoustic models using moderately sized datasets. 
There are two components in our approach, one dealing with the design of neural architectures and the other with learning of the corresponding parameters. 

Section~\ref{sec:parznets} is concerned with the design of neural architecture, 
which should perform automatic feature extraction by avoiding fast compression schemes associated with information loss when operating with standard non-adaptive filterbank features~\cite{alsteris,meyer07,petere99}. We design the neural network as a Lipschitz continuous operator that maps speech waveform frames into a feature space in such a way that small perturbations in the inputs caused by local translations and diffeomorphisms result in relatively small changes in the pre-softmax network outputs. As we operate in the waveform domain, the first layer of our convolutional network extracts information relevant for discrimination between phonetic units by decomposing a speech frame into frequency sub-bands using a set of parametric band-pass filters. The filters are defined by cosine modulations of compactly supported windows and allow for embedding of waveform signals into a structured high-dimensional space where we hypothesize that phonetic units will be easier to separate. The network then employs standard \acro{1d} convolutional layers with non-parametric filters for extraction of relevant spectro-temporal patterns while gradually compressing the structured representation generated by the sub-band decomposition. The outputs of the last such convolutional block are passed to a multi-layer perceptron (\acro{mlp}) with a softmax output.\IEEEpubidadjcol

The learning component of our approach is described in Section~\ref{subsec:svi}. We propose to learn a probabilistic parametrization of our architecture using variational inference. The motivation for this comes from the fact that for robustness one needs to be able to select the operator mapping with a good Lipschitz constant. The role of probabilistic parametrization and variational inference is to regularize the training process, thus allowing us to learn a robust feature representation of speech signals. This is different from a typical acoustic model, which employs an artificial neural network with real-valued parameters. Such a \emph{deterministic} parametrization of the network fails to capture the uncertainty of individual parameters and their importance for the learning task. Bayesian machine learning provides a principled framework for modeling uncertainty by finding plausible models that could explain the observed data~\cite{barber,Ghahramani2015}. In particular, a (deterministic) neural network with fixed parameter values models the conditional probability of a sub-phonetic unit given a speech frame. In \emph{stochastic} neural networks one additionally assumes that the parameters follow some prior distribution. The latter coupled with the aforementioned likelihood gives rise to a posterior distribution of parameter values conditioned on the observed data. Such posteriors are typically defined via analytically intractable integrals that can be approximated using scalable inference techniques such as stochastic variational inference~\cite{blundell15,Buntine91,Graves}. In particular, the main idea is to approximate intractable posteriors by optimizing over parameters of an a priori selected family of variational distributions. The optimization objective in variational inference consists of two terms: \emph{i}) the expected negative log-likelihood of the model, where the expectation is taken with respect to the variational distribution, and \emph{ii}) the Kullback--Leibler divergence that performs  regularization. The expectation in the first term is approximated by sampling the variational distribution, which is typically given by a Gaussian mean field. In this way, the variational formulation injects randomness into the forward pass that computes the loss associated with a particular mini-batch. As a result, stochastic neural networks can capture parameter uncertainty and are less sensitive to perturbations in parameter values, as well as less susceptible to over-fitting~\cite{blundell15,Graves}. A further regularization effect, 
incorporated via the Kullback--Leibler divergence, is specified by an analytically intractable integral. For this we propose an effective approximation based on the Gauss--Hermite quadrature. Variational inference has been used previously in speech recognition, albeit in a different context, to maintain the balance between a dataset size and model complexity~\cite{watanabe2012,watanabe03}. In addition to this, a high correlation between the uncertainty in individual parameters and their importance for speech recognition has been observed in stochastic recurrent nets~\cite{braun,Graves}. Previous work, however, does not operate in the waveform domain, focuses on recurrent nets and considers variational inference separately from the properties encoded into the architecture (i.e.\ Lipschitz continuity in our case).

In Section~\ref{sec:rw}, we focus on the relationship with prior work on speech recognition in the waveform domain. We then evaluate the proposed approach empirically on three benchmark datasets for automatic speech recognition: \acro{timit}, \acro{aurora4}, and \acro{ami-ihm}. A summary  of our empirical results is provided in Section~\ref{sec:experiments}. 
The ablation study (evaluating the effectiveness of individual components in our approach) demonstrates that acoustic models based on modulation filter learning can be more effective, in a statistically significant way, than the ones with non-adaptive filters. Moreover, the experiments indicate that the proposed approximation scheme based on the Gauss--Hermite quadrature provides a general (with respect to the choice of prior function) and effective means for approximating the Kullback--Leibler divergence term. The experiments on the \acro{timit} dataset demonstrate that the approach does not over-fit despite using a rather large network on what in speech recognition is considered to be a small dataset. Moreover, our results on \acro{aurora4} show that the approach is capable of learning a noise robust model, outperforming significantly the state-of-the-art baselines for waveform-based speech recognition on this dataset. It is also promising that on the same dataset the approach outperforms a recently proposed deep convolutional network for learning of robust acoustic models with standard \acro{fbank} features~\cite{vdcnn}. The experiments on \textsc{ami} (conversational speech, without $\mathrm{i}$-$\mathrm{vectors}$ or data augmentation) show that the approach outperforms recently proposed architectures for raw speech (see~{\cite{multispan}} and~{\cite{sincnet}}) and performs on par with a state-of-the-art \textsc{fbank}/\textsc{mfcc} based deep time-delay neural network (\acro{tdnn}) model~\cite{Ghahremani2016}. Thus, our empirical contributions provide comprehensive evidence for the effectiveness of variational neural networks operating directly in the waveform domain.

\begin{figure*}[t!]
	\centering
	\includegraphics[scale=0.2]{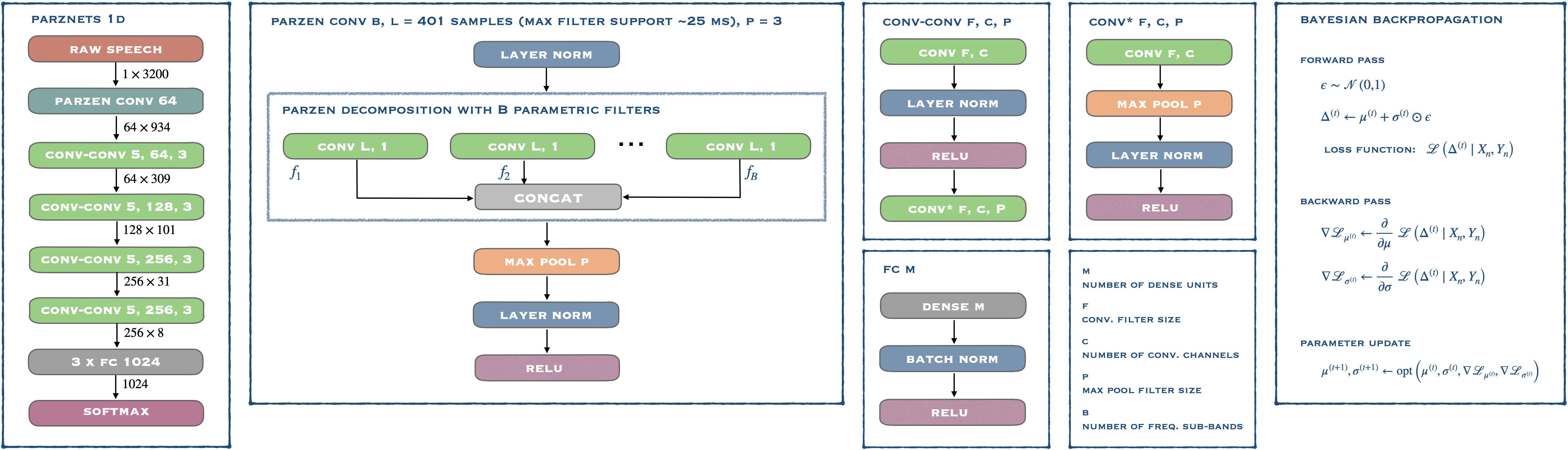}
	\caption{This is a schematic illustrating \acro{parznets} with \acro{1d} convolutional operators. The illustration is supplemented with the Parzen block (the second panel on the left) that decomposes a raw speech frame into frequency sub-bands and a pseudo-code description of Bayesian backpropagation used in variational inference. The loss function in the rightmost panel refers to the variational objective that is described in detail in Section~{\ref{subsec:svi}}.
	}
	\label{fig:arch}\vspace{-2ex}
\end{figure*}

\section{Parznets --- Deep Convolutional Neural Networks for Waveform-based Speech Recognition}
\label{sec:parznets}

This section describes an artificial neural network for learning
acoustic models in the waveform domain. We first provide a brief overview of the relevant building blocks of the architecture (Section~\ref{subsec:parzen-overview}) and then introduce a parametric convolutional layer responsible for decomposition of speech signals into frequency sub-bands (Section~\ref{subsec:parzen-filters}). The section concludes with a theoretical analysis demonstrating that the proposed neural architecture defines a Lipschitz continuous operator in the waveform domain (Section~\ref{subsec:parzen-motivation}).

\subsection{Overview of the Neural Architecture}
\label{subsec:parzen-overview}

We would like to design an architecture capable of embedding redundancies into the representation, thereby avoiding significant overlaps between positioning of different phonetic units while allowing for a fair amount of additive noise and distortion at inputs. Motivated by this, we extract information relevant for discrimination between phonetic units via a parametric Parzen convolutional block (Section~\ref{subsec:parzen-filters}) that decomposes a waveform frame into frequency sub-bands, thereby embedding the signal into a high-dimensional space of high-resolution spectro-temporal patterns (illustrated in Fig.~\ref{fig:arch}, \acro{parznets 1d}). A notable difference compared to non-adaptive feature extraction operators (\acro{fbank} and \acro{mfcc}) is the use of a \acro{reLU} activation function instead of the modulus (squared) non-linearity. Mallat~\cite{mallat16} has demonstrated that this change in activation function does not affect the theoretical properties of such operators. Moreover, it has been established recently that neural networks with \acro{relu} activations realize piecewise linear functions and we therefore use that non-linearity throughout the network~\cite{Croce2020Provable}. The main motivation behind this choice is to avoid further confounding effects between signal and noise that might otherwise arise from additional sources of non-linearity in the automatic feature extraction process (it is well known, for example, that the effects of channel noise can be amplified by non-linearities).   
To extract relevant patterns from such a sub-band decomposition/representation, we rely on standard non-parametric convolutional filters and pass the Parzen sub-bands to double convolutional blocks with $5$ sample long filters (see \acro{conv-conv} in Fig.~\ref{fig:arch}). The gradual compression of the spectro-temporal representation is achieved by applying the max pooling operator with size $3$ (after each pair of non-parametric convolutional blocks). Previous work~\cite{Goodfellow2016} has demonstrated that a composition of convolution with max pooling tends to provide approximate local time-translation invariance. In our preliminary experiments, we investigated the effectiveness of $\max$ and (weighted) $\ell_p$ average pooling operators, and observed that the former works the best in combination with \acro{relu} activations. The features extracted by the last convolutional block are passed to an \acro{mlp} block with three hidden layers (i.e., fully connected layers denoted by \acro{fc} in Fig.~\ref{fig:arch}), followed by a softmax output block.

\subsection{Parzen Block for Sub-band Decomposition of Speech Signals}
\label{subsec:parzen-filters}

It has been demonstrated recently that feature extraction operators that combine band-pass filtering with the modulus (square) non-linearity and (weighted) local averaging are approximately locally translation invariant and Lipschitz continuous~\cite{mallat14}. 
A potential shortcoming of these operators is the fact that filter parameters are selected a priori without relying on data. As a result, the hypothesis space is selected beforehand and does not necessarily provide an ideal inductive bias for all learning tasks. Moreover, the power spectral averaging that is characteristic of these operators is typically performed over speech segments of $25$ or $32$ ms~\cite{mallat14,mallat2012}, which could be compressing the relevant information too fast into the resulting features. As a result of such compression, the feature extraction operator might be discarding the information relevant for robustness. Motivated by this, we have designed the Parzen convolutional block to tackle these shortcomings. In particular, the block does not rely on a priori selected filters but learns these via parametric convolutions that have a strongly encoded inductive bias. Moreover, the adaptive Parzen convolutional block embeds a waveform frame into a structured high dimensional space rather than compressing it into a small number of features. The latter is an important difference compared to \acro{mfcc} and \acro{fbank} coefficients, which do not focus on embedding redundancies into the representation. As explained above, the Parzen sub-band decomposition is followed by a gradual compression of the representation using a combination of convolutional and max pooling operators.

In speech recognition, band-pass filtering of signals is traditionally performed by (weighted) averaging of power spectra~\cite[see][]{mfcc1980,gales07} computed over speech frames of fixed duration. 
Alternatively, the signal can be convolved with a filter directly in the time domain. To that end, we consider a family of differentiable band-pass filters based on cosine modulations of compactly supported Parzen windows~\cite{parzen1962}. In particular, we employ the squared Epanechnikov window function given by
\begin{align}
k_{\gamma}\brackets{t} = \max \cbrackets{0, 1 - \gamma t^2}^2 \ ,
\end{align}
where $\gamma$ is a parameter controlling the window width, and implicitly its frequency bandwidth. The filter can be made more frequency selective by increasing its exponent (illustrated above with the square operator), which is a consequence of increasing its order of differentiability. To allow for flexible placement of the center frequency we  rely on cosine modulation. Thus, Parzen filters are defined with only two differentiable parameters, $\eta$ controlling the modulation frequency and $\gamma$ controlling the filter bandwidth:
\begin{equation}
\label{eq:parzen}
\phi_{\eta,\gamma}\brackets{t}=\cos \brackets{2\pi \eta t} \cdot k_{\gamma}\brackets{t} \ . 
\end{equation}
As illustrated in Fig.~\ref{fig:arch} (the leftmost panel), for each filter configuration $\cbrackets{\brackets{\eta_i, \gamma_i}}_{i=1}^B$, we use Eq.~\ceq{eq:parzen} to generate a one-dimensional convolutional filter with maximum length given by the number of samples in $25$ ms of speech; filters with shorter support are symmetrically padded with zeros.
The outputs of parametric convolutions are concatenated into a high dimensional spectro-temporal decomposition of a signal and then passed to a max pooling operator, followed by layer normalization~\cite{layernorm}. As all of the operations in this parametric block are differentiable, it is possible to construct an auto-differentiation graph that seamlessly provides gradients with respect to parameters of Parzen filters. In comparison to wavelet filters~\cite{Khan18NIPS}, the Parzen convolutional block offers additional flexibility by allowing independent control over bandwidth and modulation frequency. Moreover, the block optimizes for the positioning of the two parameters while having the parametric form of the filter factored into the optimization. This can be seen as a more flexible approach compared also to the two-step procedure employed by~\cite{sincnet}, where filter cut-off frequencies are optimized with respect to a fixed-length rectangular window, and then a Hamming window is superimposed to suppress the ripple effects.

\subsection{Lipschitz Continuity of the Operator Mapping}
\label{subsec:parzen-motivation}

We start with a review of Lipschitz continuity for operator mappings and properties relevant for their robustness. Following this, we demonstrate that the principle for the design of neural architectures outlined in Section~\ref{subsec:parzen-overview} and Fig.~\ref{fig:arch} defines a Lipschitz continuous operator in the waveform domain.

Let $\mathcal{L}\brackets{\mathbb{R}}$ denote the space of square integrable functions defined on $\mathbb{R}$ and consider a continuous signal $f \in \mathcal{L}\brackets{\mathbb{R}}$. An operator $\Phi \colon \mathcal{L}\brackets{\mathbb{R}} \rightarrow \mathcal{H}$ is a mapping of a signal into a Hilbert space $\mathcal{H}$. Let $T_c f \brackets{t}=f\brackets{t-c}$ denote the translation of a signal $f$ by some constant $c \in \mathbb{R}$. An operator $\Phi$ is called \emph{translation invariant} if $\Phi\brackets{T_cf}=\Phi\brackets{f}$ for all $f \in \mathcal{L}\brackets{\mathbb{R}}$ and $c\in \mathbb{R}$. The spectrogram of a signal is an operator designed to capture variations in the power spectrum over time. It can provide an approximately locally time-translation invariant representation over durations limited by a window~\citep{mallat14}. While the spectrogram of a signal can provide local time-translation invariance, Mallat~\cite{mallat2012} has demonstrated that it does not necessarily provide stability to the action of a small diffeomorphism (e.g., speed perturbation of an utterance). Let $D_{\tau} \colon \mathcal{L}\brackets{\mathbb{R}} \rightarrow \mathcal{L}\brackets{\mathbb{R}}$ be a diffeomorphism of a signal (i.e., invertible function that maps one differentiable manifold to another such that both the function and its inverse are smooth) given by $D_{\tau}f\brackets{t}=f\brackets{t-\tau \brackets{t}}$, where $\tau \brackets{t}\in \mathcal{C}^2\brackets{\mathbb{R}}$ is a displacement field and $\mathcal{C}^2\brackets{\mathbb{R}}$ denotes the space of twice continuously differentiable functions over the reals. For example, one can take $\tau\brackets{t}=\epsilon t$ with $\epsilon \in \mathbb{R}$ and $\epsilon \rightarrow 0$. To preserve stability relative to a small diffeomorphism of a signal, it is sufficient to ensure that the operator $\Phi$ is Lipschitz continuous~\citep{mallat2012,mallat14}. A translation invariant operator $\Phi$ is Lipschitz continuous with respect to actions of $\mathcal{C}^2$-diffeomorphisms if for any compact $\Omega \subset \mathbb{R}$ there exists a constant $L$ such that for all signals $f \in \mathcal{L}\brackets{\mathbb{R}}$ supported on $\Omega$ and all $\tau \in \mathcal{C}^2\brackets{\mathbb{R}}$ it holds that~\citep[for more details see, e.g.,][]{mallat2012}
\begin{align*}
\begin{aligned}
& \norm{\Phi\brackets{f}-\Phi\brackets{D_{\tau}f}}_{\mathcal{H}}\leq L \norm{\mathbb{I}-D_{\tau}}_{\infty}\norm{f} && \\
& \coloneqq  L \brackets{\sup_{t \in \Omega}\ \norm{\nabla\tau \brackets{t}} + \sup_{t \in \Omega}\ \norm{\nabla\nabla\tau \brackets{t}} } \norm{f} \ , & 
\end{aligned}
\end{align*}
where $\mathbb{I}$ denotes the identity mapping. The Lipschitz continuity of operator $\Phi$ implies invariance to \emph{local translations} and/or signal warping by a diffeomorphism $\tau\brackets{t}$, up to the first and second order deformation terms~\cite{mallat2012}. Such signal perturbations typically come as a result of variability in speech production and differences between speakers. Another aspect of robust representations is the ability to withstand a fair amount of additive and channel/linear noise. It is easy to show (e.g., using the convolution theorem) that such a perturbation of a clean speech signal amounts to a linear transformation of its representation in the frequency domain. Thus, an operator that is Lipschitz continuous over the sub-band decomposition of a signal has the potential to work effectively on noisy speech. In particular, a noise corrupted signal is a linear transformation of the clean signal in the frequency domain and will be contained within a ball of constant radius centered at the clean signal. An operator that is Lipschitz continuous over the frequency representation of a signal will exhibit small variations over such balls and, thus, it can provide stability relative to additive and channel noise.  
It is, however, important to point out that the robustness of such an operator quantitatively depends on the value of the Lipschitz constant.

The operator defined by our neural network maps a frame of raw speech $x \in \mathbb{R}^d$ into a vector of pre-softmax outputs $z \in \mathbb{R}^s$, where $d$ is the number of samples in the input frame and $s$ is the dimension of the pre-softmax representation. Moreover, this is achieved by having an intermediate representation of the signal in the frequency domain via sub-band decomposition performed by the Parzen block. The operator mapping can be expressed as a composition of functions
\begin{align*}
	\Phi \brackets{x} = \Brackets{\rho_l \circ \rho_{l-1} \circ \dots \rho_1}\brackets{x} \ ,
\end{align*}
where $\rho_i$ represents the $\acro{relu}$ activation function, linear or pooling operator. In particular, the building blocks of our architecture are fully connected and convolutional layers, which are both linear operators and can be realized as matrix-vector multiplications~\cite[see, e.g.,][]{Gouk18}. For a fully connected block with weights $W$ and bias $b$, the Lipschitz constant $L$ is given by
\begin{align*}
	\norm{Wz + b - Wz' - b}_2 \leq L \norm{z-z'}_2  . 
\end{align*}
Thus, the minimal value of the Lipschitz constant is equal to $L=\sup_{z\in\mathcal{B}} \nicefrac{\norm{Wz}_2}{\norm{z}_2}$, where $\mathcal{B}$ is a ball of constant radius containing all the layer inputs in its interior. The convolution blocks can also be realized via matrix-vector multiplications using doubly block circulant matrices~\cite{Gouk18}. Thus, a good Lipschitz constant can be obtained by keeping low the upper bounds on the weights in linear blocks and convolutional filters, while at the same time optimizing for the operator mapping such that the sub-phonetic units are linearly separable. 

Gouk et al.~\cite{Gouk18} have demonstrated that the \acro{reLU} activation function is Lipschitz continuous with constant one. This activation function is also monotonic and, thus, defines a contraction. The same holds for the $\max$ operator used for signal pooling, as demonstrated with the following proposition.

\begin{restatable}{proposition}{propMaxPool}
The max pooling operator is a Lipschitz continuous function with constant one.
\end{restatable} 

\begin{proof}
The $\max$ pooling operator can be expressed as $\iota(z)= \max_{1\leq j \leq k} \sigma_{i, j} (z)$, where $\sigma_{i, j}(z)$ is the $j$-th output of the $i$-th network layer that takes a vector $z$ as input. We will show that
\begin{align*}
\absolute{\max_{1\leq j \leq k} \sigma_{i,j} (z) - \max_{1\leq j \leq k} \sigma_{i,j} (z')} \leq \norm{\sigma_i(z)-\sigma_i(z')}_{2} \ .
\end{align*}
We can, without loss of generality, assume that $\iota(z) \geq \iota(z')$. Denote $j_0=\argmax_{1\leq j \leq k}  \sigma_{i,j} (z)$. Then, 
\begin{align}
\begin{aligned}
& \absolute{\iota(z) - \iota(z')} = \max_{1\leq j \leq k} \sigma_{i,j} (z) - \max_{1\leq j \leq k} \sigma_{i,j} (z') \leq & \\
& \sigma_{i, j_0} (z) - \sigma_{i,j_0} (z') \leq \max_{1 \leq j \leq k} \brackets{\sigma_{i, j}(z)-\sigma_{i, j}(z') }\leq & \\
& \norm{\sigma_{i}(z)-\sigma_{i}(z')}_{\infty} \leq \norm{\sigma_{i}(z)-\sigma_{i}(z')}_{2} \ .&
\end{aligned}
\end{align}
\end{proof}

As the proposed neural architecture is defined using a composition of Lipschitz continuous functions, the resulting operator mapping is also Lipschitz continuous. To ensure that the training procedure selects a good Lipschitz constant, we propose to use a probabilistic parametrization for our network and learn the corresponding parameters using stochastic variational inference, as described in the next section. 

While Lipschitz continuity of neural architectures has already been associated with robust representation learning~\cite[e.g., see][]{Gouk18}, this is the first work that provides an explanation for possible advantages of the filterbank over sample-based audio processing. In particular, in order to learn an effective (relative to longer time-shifts, additive and channel/linear noise) waveform-based representation of speech signals one can design the neural architecture as a Lipschitz continuous operator in the waveform-domain, with an intermediate representation in the frequency domain that can be realized using a sub-band decomposition of the signal (the Parzen block in our case).

\section{Learning Parznets using Stochastic Variational Inference}
\label{subsec:svi}

In deterministic neural networks, parameters/weights are real-valued and one performs inference by optimizing a loss function over them. Performing inference in stochastic/probabilistic neural networks, on the other hand, requires a posterior distribution over parameters given data~\cite{Graves}. For a fixed setting of weights, a deterministic neural network with \emph{softmax} outputs models the conditional probability of a categorical label $y \in \mathcal{Y}$ given an instance $x \in \mathcal{X}$ using an exponential family model~\cite[e.g., see][]{jaynes57,altun04}. In stochastic networks, it is further assumed that weights have a prior distribution $p_r \brackets{\Delta \mid \eta}$, where $\Delta$ denotes all the parameters in the network and $\eta$ are prior hyper-parameters.
The posterior distribution of neural network parameters conditioned on a set of \acro{iid} examples $\{(x_i, y_i)\}_{i=1}^n$ with $X_n=\cbrackets{x_i}_{i=1}^n$ and $ Y_n=\cbrackets{y_i}_{i=1}^n$ is typically given by an analytically intractable integral, with parameter-specific posterior probabilities $p\brackets{\Delta \mid X_n, Y_n}$ satisfying
\begin{align*}
\begin{aligned}
& \log p\brackets{\Delta \mid X_n, Y_n} \propto \log p_r \brackets{\Delta \mid \eta} + \sum_{i=1}^{n}\log p\brackets{y_i \mid x_i, \Delta} \ .&
\end{aligned}
\end{align*}
Variational inference~\cite{blundell15,Buntine91,Graves,kingma15} is a technique for the approximation of posterior distributions involving analytically intractable integrals. It works by introducing a family of variational probability density functions $q\brackets{\Delta \mid \mu, \sigma}$, with $\mu$ and $\sigma$ denoting variational parameters, such that a set of these specifies a family of probability distributions. Typically, the variational family is parametrically much simpler than the posterior distribution over network parameters $p\brackets{\Delta \mid X_n, Y_n}$. The main idea is to approximate the posterior $p\brackets{\Delta \mid X_n, Y_n}$ by optimizing a lower bound on the log-marginal likelihood of the model over the parameters of the variational distribution 
\begin{align}
\label{eq:svi}
\min_{q \in \mathcal{Q}} \ \kl\brackets{q \mid\mid p_r} - \sum_{i=1}^{n} \mathbb{E}_{\Delta \sim q\brackets{\Delta \mid \mu, \sigma}} \sbrackets{\log p \brackets{y_i \mid x_i, \Delta}} ,
\end{align}
where $\mathcal{Q}$ is a family of variational distributions specified by domains of parameters $\mu$ and $\sigma$. The Gaussian mean field approximation assumes that the variational distribution is the product of univariate Gaussian distributions, i.e.\ $q\brackets{\Delta \mid \mu, \sigma}=\prod_{i=1}^p \ \mathcal{N}\brackets{\Delta_i \mid \mu_i, \sigma_i^2} $, where $p$ is the total number of parameters in the model, $\Delta_i$ is the $i$-th component of the parameter vector $\Delta$, and $\mathcal{N}\brackets{\Delta_i \mid \mu_i, \sigma_i^2}$ is a univariate Gaussian distribution of $\Delta_i$ with mean $\mu_i$ and variance $\sigma_i^2$.

The expected log-likelihood of the model 
\begin{align*}
L_n\brackets{q}=\sum_{i=1}^{n} \mathbb{E}_{\Delta \sim q\brackets{\Delta \mid \mu, \sigma}} \sbrackets{\log p \brackets{y_i \mid x_i, \Delta}} 
\end{align*}
is analytically intractable and an evaluation of this expectation is required for the forward-pass when computing the loss function for a setting of the variational parameters $\mu$ and $\sigma$. Stochastic variational inference approximates this term in the forward-pass by sampling the variational distribution~\cite{kingma15}:
\begin{align*}
L_n\brackets{q} \approx \tilde{L}_m\brackets{q} = \frac{n}{m} \sum_{i=1}^{m} \log p \brackets{y_i \mid x_i, \Delta}  ,
\end{align*}
with $\Delta_j = \mu_j + \epsilon_j \sigma_j$ being a sample from $\mathcal{N}\brackets{\Delta_j \mid \mu_j,\sigma_j^2}$ given by $\epsilon_j \sim \mathcal{N}\brackets{\epsilon_j \mid  0, 1}$ ($1 \leq j \leq p$), and where $\cbrackets{\brackets{x_i, y_i}}_{i=1}^m$ is a mini-batch with $m$ random examples. As illustrated in Fig.~\ref{fig:arch} (the rightmost panel), the parameters of the neural network are populated with a random sample $\Delta$ drawn from the variational distribution and with that setting one computes the loss function for a particular mini-batch. The forward-pass sequence of actions is differentiable with respect to the variational parameters $\upsilon=\cbrackets{\brackets{\mu_i, \sigma_i} }_{i=1}^p$ and unbiased. Consequently, the gradient of this estimator is also unbiased and can be computed in the backward-pass by $\nabla_{\upsilon} L_n\brackets{q} \approx \nicefrac{n}{m}\sum_{i=1}^{m} \nabla_{\upsilon} \log p \brackets{y_i \mid x_i, \Delta}$, where the network parameters $\Delta$ originate from the forward-pass components and are given by $\Delta_j= \mu_j + \epsilon_j \sigma_j $. Thus, stochastic neural networks update the variational mean and variance parameters during gradient descent and use back-propagation for the computation of the gradients with respect to these parameters. At test time, the parameters of neural architecture are populated with variational means. In this way, a stochastic neural network injects randomness into network parameters for each mini-batch. As a result, the inferred model can capture parameter uncertainty and is likely to be more stable to parameter perturbations than an equivalent deterministic model. A further regularization effect can be achieved via the Kullback--Leibler divergence term (Eq.~\ref{eq:svi}), discussed in the next section.

\subsection{Approximation of Kullback--Leibler Divergence}
\label{subsec:svi-priors}

The Kullback--Leibler divergence term is responsible for regularization (Eq.~\ref{eq:svi}) and it is defined in terms of an analytically intractable integral that is typically approximated by Monte Carlo estimates using samples from the variational distribution~\cite{blundell15} or prior specific second order approximations~\cite{kingma15,molchanov17a}. We propose an approximation scheme based on the Gauss--Hermite quadrature, which independently of the prior distribution used allows for an approximation with a polynomial of arbitrarily high degree. More specifically, variational inference typically relies on Gaussian mean field approximations and this implies that the divergence term can be expressed as a sum of one dimensional integrals with respect to univariate Gaussian measures. Such integrals can be effectively approximated using the Gauss--Hermite quadrature~\cite[][]{AbramowitzStegun}, which is a quadrature with the weight function $\exp(-u^2)$ over the interval $u \in (-\infty, \infty)$. The following theorem provides a formal specification of the Gauss--Hermite quadrature for univariate functions.

\begin{theorem}~\cite[Abramowitz and Stegun,][]{AbramowitzStegun}
	For a univariate function $h$ and an integral
	\begin{align*}
	\mathcal{J}=\int_{-\infty}^{\infty} h\brackets{u}\exp\brackets{-u^2} \D{u} \ ,
	\end{align*}	
	the Gauss-Hermite approximation of order $s$ satisfies $ \mathcal{J}\approx \sum_{i=1}^{s}w_i h\brackets{u_i} $,
	where $\cbrackets{u_i}_{i=1}^s$ are the roots of the physicist's version of the Hermite polynomial 
	$	H_s\brackets{u}=\brackets{-1}^s\exp\brackets{u^2}\frac{\D{}^s}{\D{u^s}}\exp\brackets{-u^2} $
	and the corresponding weights $\cbrackets{w_i}_{i=1}^s$ are given by $
	w_i = \frac{2^{s-1} s!\sqrt{\pi}}{s^2 H_{s-1}\brackets{u_i}^2} $. 
	\label{thm:gauss-hermite}
\end{theorem}
Such approximations have been studied theoretically, with convergence rates provided for polynomials and functions of limited regularity. More specifically, the Gauss--Hermite approximation of order $s$ is exact and, thus, optimal for all polynomials of degree $2s-1$ or less~\cite{AbramowitzStegun}. For functions $h \in \mathcal{C}^{2s}$, the error of the Gauss--Hermite quadrature is given by~\cite{Stoer02}
\begin{align}
\begin{aligned}
& \mathcal{E}_s\brackets{h} = & \\
&\int_{-\infty}^{\infty} h\brackets{u}\exp\brackets{-u^2}\D{u} - \sum_{i=1}^{s}w_ih\brackets{u_i} = & \\
&  \frac{s!  \cdot \sqrt{\pi} }{2^s\cdot (2s)!} h^{(2s)}\brackets{\hat{u}} \ , &
\end{aligned}
\end{align}
where $\hat{u}\in\brackets{-\infty, \infty}$. Xiang and Bornemann~\cite{Xiang12} have studied convergence rates of the Gaussian quadrature for functions of limited regularity. The regularity of an integrand is expressed via the decay rate of its expansion coefficients in the basis formed by the Chebyshev polynomials of the first kind. In particular, if the expansion coefficients $a_i \in \mathcal{O}(i^{-p-1})$ for some $p>0$ (where $a_i$ corresponds to the Chebyshev polynomial of the $i$-th degree) then the error of the quadrature approximation of order $s$ can be upper bounded by $\mathcal{O}(s^{-p-1})$ for $p>2$. For $0<p<2$, on the other hand, the guaranteed convergence rate is sightly slower and can be upper bounded by $\mathcal{O}(s^{-\nicefrac{3p}{2}})$. These results can provide theoretically well founded guidelines for selecting the approximation order and quantify the trade-offs between approximation quality and computational costs.

\subsection{Illustration of the Approximation Scheme with Two Priors}
\label{subsec:illustration-priors}

In~\cite{kingma15}, it has been argued that \emph{log-scale uniform} priors provide a theoretical justification for the dropout regularization technique~\cite{SrivastavaHKSS14} frequently used in the training of neural networks.
The Bayesian aspect of that justification has recently been disputed in~\cite{hron18a} but the technique can still be viewed as performing penalized log-likelihood estimation with the Kullback--Leibler divergence term acting as regularizer.
The prior is given by
$p_{r, \mathrm{lsu}} \brackets{\log \absolute{\Delta_i}} \propto \mathrm{const},$ or equivalently  $p_{r, \mathrm{lsu}} \brackets{\absolute{\Delta_i}} \propto \nicefrac{1}{\absolute{\Delta_i}}$, where $\Delta_i$ is some network parameter. 
Two different second order approximations of the Kullback--Leibler divergence between Gaussian mean field posteriors and this prior distribution were provided in~\cite{kingma15} and~\cite{molchanov17a}.  
We propose an alternative Gauss--Hermite approximation, formalized in the following proposition. Just as in~\cite{SrivastavaHKSS14} and~\cite{kingma15}, we employ a parametrization of variational Gaussian mean field known as the \emph{dropout posterior}, with mean parameter $\mu_j$ and variance $\sigma_j^2= \alpha_j \mu_j^2$ specified via a scaling parameter $\alpha_j > 0$ (for all $1 \leq j \leq p$). 
\begin{restatable}{proposition}{propLogScaleUniform}
	\label{prop:log-scale-unif}
	The \acro{kl} divergence between a Gaussian distribution with the dropout parametrization of variance and a log-scale uniform prior can be approximated by
	\begin{align*}
	\kl \brackets{q \mid \mid p_{r, \mathrm{lsu}}} \approx - \nicefrac{1}{2} \log \alpha + \nicefrac{1}{\sqrt{\pi}} \sum_{i=1}^s w_i \log \absolute{v_i} + \mathrm{const.} \ ,
	\end{align*}
	where $v_i = \sqrt{2\alpha}u_i + 1$ (for all $1\leq i \leq s$)
	and the $\cbrackets{u_i}_{i=1}^s$ are roots of the Hermite polynomial with corresponding quadrature weights $\cbrackets{w_i}_{i=1}^s$.
\end{restatable}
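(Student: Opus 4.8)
The plan is to collapse the KL term to a single one-dimensional Gaussian expectation with no closed form, then discretize that expectation with the Gauss--Hermite rule. Since the variational family is a mean field product, the total divergence is a sum of per-coordinate divergences, so it suffices to handle one coordinate; I drop the index. Writing $\kl\brackets{q \mid\mid p_{r,\mathrm{lsu}}} = \mathbb{E}_{q}\sbrackets{\log q(\Delta)} - \mathbb{E}_{q}\sbrackets{\log p_{r,\mathrm{lsu}}(\Delta)}$, the first (entropy) term is standard: for $q=\mathcal{N}(\mu,\sigma^2)$ with the dropout parametrization $\sigma^2 = \alpha\mu^2$ one has $\mathbb{E}_{q}\sbrackets{\log q(\Delta)} = -\tfrac12\log\brackets{2\pi e\,\sigma^2} = -\tfrac12\log\alpha - \log\absolute{\mu} + \mathrm{const.}$, where the constant absorbs $-\tfrac12\log(2\pi e)$.

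For the cross term I would use the prior in the form $p_{r,\mathrm{lsu}}\brackets{\absolute{\Delta}} \propto \absolute{\Delta}^{-1}$, so that $-\log p_{r,\mathrm{lsu}}(\Delta) = \log\absolute{\Delta} + \mathrm{const.}$; the (formally divergent) normalizer is swept into the additive constant, which is legitimate because the objective is only ever used through its gradients in $\mu,\sigma$. Hence $-\mathbb{E}_{q}\sbrackets{\log p_{r,\mathrm{lsu}}(\Delta)} = \mathbb{E}_{q}\sbrackets{\log\absolute{\Delta}} + \mathrm{const.}$ The key algebraic step is the scale decomposition: writing $\Delta = \mu\brackets{1 + \sqrt{\alpha}\,\epsilon}$ with $\epsilon\sim\mathcal{N}(0,1)$ — i.e.\ exactly the reparametrization $\Delta = \mu + \sigma\epsilon$ under $\sigma = \sqrt{\alpha}\,\absolute{\mu}$, with the sign of $\mu$ immaterial by symmetry of $\epsilon$ — gives $\log\absolute{\Delta} = \log\absolute{\mu} + \log\absolute{1 + \sqrt{\alpha}\,\epsilon}$, so $\mathbb{E}_{q}\sbrackets{\log\absolute{\Delta}} = \log\absolute{\mu} + \mathbb{E}_{\epsilon}\sbrackets{\log\absolute{1 + \sqrt{\alpha}\,\epsilon}}$.

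Adding the two pieces, the $\pm\log\absolute{\mu}$ terms cancel and leave $\kl\brackets{q\mid\mid p_{r,\mathrm{lsu}}} = -\tfrac12\log\alpha + \mathbb{E}_{\epsilon\sim\mathcal{N}(0,1)}\sbrackets{\log\absolute{1 + \sqrt{\alpha}\,\epsilon}} + \mathrm{const.}$ It then remains to bring the residual expectation into Gauss--Hermite form: the substitution $\epsilon = \sqrt{2}\,u$ turns $\int \tfrac{1}{\sqrt{2\pi}}\,e^{-\epsilon^2/2}\log\absolute{1 + \sqrt{\alpha}\,\epsilon}\D{\epsilon}$ into $\tfrac{1}{\sqrt{\pi}}\int e^{-u^2}\log\absolute{1 + \sqrt{2\alpha}\,u}\D{u}$, and the $s$-point Gauss--Hermite quadrature with nodes $\cbrackets{u_i}$ (roots of the Hermite polynomial) and weights $\cbrackets{w_i}$ approximates this by $\tfrac{1}{\sqrt{\pi}}\sum_{i=1}^{s} w_i \log\absolute{v_i}$ with $v_i = \sqrt{2\alpha}\,u_i + 1$, which is the claimed expression.

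The main obstacle is not the algebra, which is routine, but making the approximation step honest: the integrand $u\mapsto\log\absolute{1 + \sqrt{2\alpha}\,u}$ has a logarithmic singularity at $u = -1/\sqrt{2\alpha}$, so the textbook Gauss--Hermite error bound — which presumes analyticity, or at least sufficient smoothness, of the integrand on the real line — does not apply verbatim. I would argue convergence instead from the fact that $\log\absolute{\cdot}$ is locally integrable (hence the exact integral against the light-tailed Gaussian weight is finite) together with a density argument: the quadrature is exact on polynomials of degree at most $2s-1$, so polynomial approximation of the integrand away from the singularity controls the bulk of the error, while the contribution of a shrinking neighbourhood of the singularity is uniformly small. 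I would explicitly flag that the convergence rate here may be slower than the spectral rate enjoyed by smooth integrands. Everything else — the entropy formula, the treatment of the improper normalizing constant, and the change of variables — is bookkeeping.
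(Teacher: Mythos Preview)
Your proposal is correct and follows essentially the same route as the paper: reduce the divergence to $-\tfrac12\log\alpha + \mathbb{E}\sbrackets{\log\absolute{1+\sqrt{\alpha}\,\epsilon}} + \mathrm{const.}$, change variables to put the integral in the weight form $e^{-u^2}$, and invoke the Gauss--Hermite rule. The only differences are that the paper imports the reduction step directly from \cite{kingma15} (Appendix~C) rather than rederiving it via the entropy/cross-entropy split and the scale factorization $\Delta=\mu(1+\sqrt{\alpha}\,\epsilon)$, and that your discussion of the logarithmic singularity and its effect on the quadrature error goes beyond what the paper addresses.
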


\begin{proof}
	From~\cite[][Appendix C]{kingma15}, we know that the Kullback--Leibler divergence term is given by
	\begin{align*}
	\kl \brackets{q \mid \mid p_{r,\mathrm{lsu}}}= \mathbb{E}_{\mathcal{N}\brackets{\epsilon \mid 1, \alpha}}  \SBrackets{\log \absolute{\epsilon}}- \frac{1}{2} \log \alpha + \mathrm{const.} 
	\end{align*}
	The expectation with respect to the Gaussian random variable $\epsilon$ can be re-written as
	\begin{align*}
	\begin{aligned}
	& \mathbb{E}_{\mathcal{N}\brackets{\epsilon \mid 1, \alpha}}  \SBrackets{\log \absolute{\epsilon}} = & \\
	&  \frac{1}{\sqrt{2\pi\alpha}} \int \exp \brackets{-\frac{\brackets{\epsilon - 1}^2}{2\alpha}} \log \absolute{\epsilon} \D{\epsilon}=  &\\
	& \frac{1}{\sqrt{\pi}} \int \log \absolute{\sqrt{2\alpha}t+1} \exp\brackets{-t^2} \D{t} \ . &
	\end{aligned}
	\end{align*}
	The result now follows from Theorem~\ref{thm:gauss-hermite} by taking $h\brackets{t}= \log \absolute{\sqrt{2\alpha}t+1}$.
\end{proof}

The scale-mixture is another prior distribution frequently used in variational inference, first proposed in~\cite{blundell15}. It resembles the so called spike and slab prior~\cite{Chipman96,McCulloch93,Mitchell88} and is given by
\begin{align*}
\label{eq:spike-and-slab}
\begin{aligned}
& p_{r, \mathrm{sm}}\brackets{\Delta_i \mid \xi,\eta_1,\eta_2,\lambda} = & \\ 
&\lambda \cdot \mathcal{N}\brackets{\Delta_i \mid \xi, \eta_1^2} +
  (1 - \lambda) \cdot  \mathcal{N}\brackets{\Delta_i \mid \xi, \eta_2^2} \ , &
\end{aligned}
\end{align*}
where $\Delta_i$ is a parameter of the model (see Eq.~\ref{eq:svi}), $\eta_1^2$ and $\eta_2^2$ are prior (variance) hyper-parameters with $\eta_1 \ll \eta_2$, $\xi$ is the prior mean, and $0 \leq \lambda \leq 1$ is the mixture scale. The hyper-parameters of the prior distributions (i.e., $\eta_1$, $\eta_2$, $\lambda$, and $\xi$) are kept fixed during optimization and can be chosen via cross-validation. The first mixture component is chosen such that $\eta_1 \ll 1$, which forces many of the variational parameters to concentrate tightly around the prior mean $\xi$ (e.g., around zero for $\xi=0$). The second mixture component has higher variance and heavier tails allowing parameters to move further away from the mean. 
The prior variance hyper-parameters are shared between all the network parameters and this is an important difference compared to approaches based on the spike and slab prior~\cite{Mitchell88,McCulloch93,Chipman96}, where each model parameter has a different prior variance. The following proposition provides means for approximating the divergence term between a Gaussian mean field variational distribution and this prior function.

\begin{restatable}{proposition}{propScaleMixture}
	\label{prop:scale-mixture}
	The \acro{kl} divergence between a Gaussian distribution with the dropout parametrization of variance and a scale-mixture prior can be approximated by
	\begin{align*}
	\begin{aligned}
	& 
	\kl \brackets{q \mid \mid p_{r, \mathrm{sm}}} \approx &\\
	&
	- \log \sqrt{2\pi \alpha\mu^2} - \nicefrac{1}{\sqrt{\pi}}\sum_{i=1}^{s} w_i \log p_{r, \mathrm{sm}} \brackets{v_i} - \nicefrac{1}{2} \ ,  &
	\end{aligned}
	\end{align*}
	where $v_i=\brackets{\sqrt{2\alpha}u_i+1}\mu$ and the
	 $\cbrackets{u_i}_{i=1}^s$ are roots of the Hermite polynomial with corresponding quadrature weights $\cbrackets{w_i}_{i=1}^s$, $\alpha$ and $\mu$ are variational parameters, and $p_{r, \mathrm{sm}}$ is some scale-mixture prior distribution.
\end{restatable}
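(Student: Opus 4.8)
The plan is to reduce the claim to a one-dimensional statement and then apply Gauss--Hermite quadrature, exactly as one would for Proposition~\ref{prop:log-scale-unif}. Since the variational family is the Gaussian mean field $q\brackets{\Delta \mid \mu, \sigma} = \prod_{j=1}^{p} \mathcal{N}\brackets{\Delta_j \mid \mu_j, \sigma_j^2}$ and the scale-mixture prior $p_{r,\mathrm{sm}}$ factorizes over the same coordinates, $\kl\brackets{q \mid\mid p_{r,\mathrm{sm}}}$ decomposes as a sum of per-coordinate divergences. It therefore suffices to establish the displayed identity for a single coordinate with variational mean $\mu$ and variance $\sigma^2 = \alpha\mu^2$; I would drop the index $j$ for the remainder of the argument and reinstate the sum over coordinates at the end.

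First I would split the per-coordinate divergence into its entropy and cross-entropy parts, $\kl\brackets{q \mid\mid p_{r,\mathrm{sm}}} = \mathbb{E}_{\Delta \sim q}\sbrackets{\log q\brackets{\Delta}} - \mathbb{E}_{\Delta \sim q}\sbrackets{\log p_{r,\mathrm{sm}}\brackets{\Delta}}$. The first expectation is the negative differential entropy of $\mathcal{N}\brackets{\mu, \alpha\mu^2}$, available in closed form, $\mathbb{E}_{\Delta \sim q}\sbrackets{\log q\brackets{\Delta}} = -\tfrac{1}{2}\log\brackets{2\pi e\, \alpha\mu^2} = -\log\sqrt{2\pi\alpha\mu^2} - \tfrac{1}{2}$, and it already produces the first and last summands of the stated approximation without incurring any error.

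It remains to approximate the cross-entropy term $-\mathbb{E}_{\Delta \sim q}\sbrackets{\log p_{r,\mathrm{sm}}\brackets{\Delta}}$. Using the (distributional) reparametrization $\Delta = \mu + \sigma\epsilon = \brackets{1 + \sqrt{\alpha}\,\epsilon}\mu$ with $\epsilon \sim \mathcal{N}\brackets{0,1}$, this expectation equals $\tfrac{1}{\sqrt{2\pi}}\int_{-\infty}^{\infty} e^{-\epsilon^2/2}\,\log p_{r,\mathrm{sm}}\brackets{\brackets{1 + \sqrt{\alpha}\,\epsilon}\mu}\,\D\epsilon$, and the substitution $\epsilon = \sqrt{2}\,u$ converts it into the Gauss--Hermite form $\tfrac{1}{\sqrt{\pi}}\int_{-\infty}^{\infty} e^{-u^2}\,\log p_{r,\mathrm{sm}}\brackets{\brackets{\sqrt{2\alpha}\,u + 1}\mu}\,\D u$. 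Applying the $s$-point Gauss--Hermite rule with nodes $\cbrackets{u_i}_{i=1}^{s}$ (the roots of the $s$-th Hermite polynomial) and weights $\cbrackets{w_i}_{i=1}^{s}$ gives $\mathbb{E}_{\Delta \sim q}\sbrackets{\log p_{r,\mathrm{sm}}\brackets{\Delta}} \approx \tfrac{1}{\sqrt{\pi}}\sum_{i=1}^{s} w_i \log p_{r,\mathrm{sm}}\brackets{v_i}$ with $v_i = \brackets{\sqrt{2\alpha}\,u_i + 1}\mu$; adding the entropy term recovers exactly the asserted formula, and summing over the $p$ coordinates yields the network-level approximation.

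The only delicate point — and what I would treat as the main obstacle — is controlling the quadrature error, since $u \mapsto \log p_{r,\mathrm{sm}}\brackets{\brackets{\sqrt{2\alpha}\,u + 1}\mu}$ is not a polynomial. Here I would invoke the standard Gauss--Hermite remainder: the $s$-point rule integrates polynomials of degree up to $2s-1$ exactly, and the error is proportional to a $2s$-th derivative of the integrand (evaluated at some intermediate point) times a leading-coefficient/factorial constant. For the scale-mixture prior this is benign: $p_{r,\mathrm{sm}}$ is infinitely differentiable and bounded away from zero by its broad component $\brackets{1-\lambda}\mathcal{N}\brackets{\cdot \mid \xi, \eta_2^2}$, while $\log p_{r,\mathrm{sm}}$ grows only quadratically in the tails, so the derivatives of $\log p_{r,\mathrm{sm}}$ are bounded on compact sets and grow polynomially, and the error decays rapidly as $s$ increases. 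This is also the qualitative advantage over the second-order expansions of~\cite{kingma15, molchanov17a}: in contrast to the log-scale uniform case of Proposition~\ref{prop:log-scale-unif}, where one must argue around the integrable singularity of $\log\absolute{v_i}$ at the origin, the scale-mixture integrand is globally smooth, so the approximation can be made accurate to any prescribed polynomial order by taking $s$ large.
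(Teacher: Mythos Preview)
Your proof is correct and follows essentially the same route as the paper: decompose $\kl\brackets{q\mid\mid p_{r,\mathrm{sm}}}$ into the closed-form Gaussian entropy $H(q)=\log\sqrt{2\pi\alpha\mu^2}+\nicefrac{1}{2}$ and the cross-entropy $\mathbb{E}_q\sbrackets{\log p_{r,\mathrm{sm}}}$, then change variables to put the latter in Gauss--Hermite form and apply the quadrature with $h(t)=\log p_{r,\mathrm{sm}}\brackets{\brackets{\sqrt{2\alpha}\,t+1}\mu}$. Your additional discussion of the mean-field factorization and the quadrature remainder is sound but goes beyond what the paper actually proves.
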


\begin{proof}
	We can re-write the divergence term as
	\begin{align*}
	\begin{aligned}
	& \kl \brackets{q \mid \mid p_{r,\mathrm{sm}}}=  & \\
	&\int q\brackets{u} \log q\brackets{u} \D{u} - \int q\brackets{u} \log p_{r,\mathrm{sm}}\brackets{u} \D{u} = & \\
	& -H(q) - \mathbb{E}_q\sbrackets{\log p_{r,\mathrm{sm}}\brackets{u}}  , &
	\end{aligned}
	\end{align*}
	where $H\brackets{q}$ denotes the entropy of the univariate Gaussian distribution given by
	\begin{align*}
	q\brackets{u}=\frac{1}{\sqrt{2\pi\alpha\mu^2}} \exp \brackets{-\frac{\brackets{u-\mu}^2}{2\alpha\mu^2}} \ .
	\end{align*}
	As the entropy of a Gaussian distribution defines an analytically tractable integral~\citep[e.g., see][]{Kullback59,Rasmussen05}, we have that the entropy of $q$ is given by 
	\begin{align*}
	H\brackets{q}=\log \sqrt{2\pi \alpha \mu^2}+\nicefrac{1}{2} \ .
	\end{align*}
	On the other hand, the expected log-likelihood of the scale-mixture prior can be approximated using the Gauss-Hermite quadrature by observing that
	\begin{align*}
	\begin{aligned}
	& \mathbb{E}_q\sbrackets{\log p_{r,\mathrm{sm}}\brackets{u}} = & \\ & \frac{1}{\sqrt{2\pi\alpha\mu^2}} \int \exp \brackets{-\frac{\brackets{u-\mu}^2}{2\alpha\mu^2}} \log p_{r,\mathrm{sm}} \brackets{u} \D{u}= & \\
	& \frac{1}{\sqrt{\pi}} \int  \log p_{r,\mathrm{sm}}\brackets{\sqrt{2\alpha\mu^2}t+\mu} \exp\brackets{-t^2} \D{t} \ . &
	\end{aligned}
	\end{align*}
	The result now follows from Theorem~\ref{thm:gauss-hermite} by taking $h\brackets{t}=\log p_{r,\mathrm{sm}}\brackets{\sqrt{2\alpha\mu^2}t+\mu}$.
\end{proof}

\section{Related Work}
\label{sec:rw}

An alternative to learning a discriminative model with non-adaptive features is to learn these features automatically as part of a neural architecture that takes raw speech as input. In addition to having a more flexible inductive bias such a model would be less susceptible to the information loss that is inherent to waveform compression by means of a projection to a lower dimensional feature space~\cite{Ager11,Ager2015PhonemeCI}. In particular, a model operating directly in the waveform domain has the potential to exploit local correlations within the signal that are typically discarded when computing Mel-filter bank values~\cite{hoshen2015}, as well as the information contained in a sequence of waveform samples without interruptions by frame boundaries characteristic to spectrograms and non-adaptive feature extraction techniques based on frame-based discrete Fourier transforms~\cite{tuske2014}. 
As a result of the latter, phonetic events on the boundaries of short frames are typically poorly described by filterbank features.

Whilst speech production embeds redundancies relevant for robustness, there are several challenges when dealing with these highly correlated raw speech inputs. In particular, the high dimensionality of waveform signals typically requires a larger number of parameters compared to standard features and a prolonged training time. Another difficulty is the fact that raw speech is known to be characterized by a large number of variations such as temporal distortion and speaker variability~\cite{sainath2015,Ghahremani2016}. Acoustic models based on neural networks operating directly in the waveform domain are, thus, likely to over-fit on small and moderately sized datasets without appropriate inductive bias. 
In this sense our approach, which combines variational inference with Lipschitz continuity of the operator mapping, provides a theoretical underpinning for the design and learning of effective waveform-based acoustic models. Previous work has also resorted to similar techniques for maintaining the balance between dataset size and model complexity. Watanabe et al.~\cite{watanabe03,watanabe04} have used variational inference for clustering of states in triphone hidden Markov models (\acro{hmm}) and learning the appropriate number of components in Gaussian mixture models (\acro{gmm}). In contrast to this, we use variational inference to learn a stochastic convolutional network that models the conditional probability of a triphone state-id given an input waveform frame. 

Graves~\cite{Graves} and Braun and Liu~\cite{braun} have used variational inference to learn a recurrent neural network as part of an end-to-end acoustic model. While the latter approach does not have an explicit \textsc{kl} divergence term characteristic to variational inference, there is a sparsity inducing penalty over the parameters defining standard deviations, which under a suitable prior could be seen as an instance of \textsc{kl} divergence. In both of these works it was observed that parameter uncertainty is correlated with the importance of individual parameters for the speech recognition tasks considered. Similarly, Hu et al.~{\cite{bnngp}} have proposed a Bayesian neural network that allows for learning with more expressive activation functions in the context of multi-layer perceptrons and standard recurrent neural networks. In particular, each hidden layer of the model relies on Bayesian averaging relative to a weight prior when computing the corresponding outputs, and variational inference for dealing with the resulting analytically intractable integrals. A Bayesian approach coupled with variational inference has also been used in~{\cite{blhuc}} for speaker adaptation. The main difference to this line of work is that neither of those models operates in the waveform domain, but rely on low-dimensional feature spaces generated by \acro{fbank} or \acro{mfcc} features. This allows for scalable inference of recurrent models, which is known to be computationally expensive for high dimensional inputs such as waveform signals. Moreover, prior work in speech recognition (to the best of our knowledge) considers variational inference independently of Lipschitz continuity and other design principles that could allow for learning of robust models in small scale settings.
Recently, an approach for modulation filter-learning based on an encoder-decoder architecture and variational inference has been considered in~\cite{Agrawal19a} and~\cite{Agrawal19b}. The encoder takes as input a Mel-spectrogram constructed using speech segments of fixed length and learns its latent representation. 
The optimization of encoder-decoder parameters is performed using variational inference and the learned filters are then used to generate features that are used as input to an \acro{mlp}. In contrast to this, we use variational inference to learn filters jointly with other network parameters (i.e., filterbank-based feature extraction/learning is not done independently of training other network modules).

A common characteristic of previous approaches for waveform-based speech recognition is the use of relatively large datasets~\cite{sainath2015,zhu2016}. In such a regime, waveform-based acoustic models are competitive with architectures relying on standard features (i.e., \acro{mfcc}, \acro{fbank}, and \acro{fmllr}). Another difference compared to our approach is that previous architectures typically employ a convolutional layer with weighted $\ell_1$ or $\ell_2$ pooling ($25$ ms long frames) to emulate filterbank features and reduce the dimension of the representation quickly~\cite{hoshen2015,palaz2013}. In contrast to this, we perform gradual compression of the waveform sub-band decomposition via max pooling and thus overcome the information loss inherent in standard features. Moreover, we use the \acro{relu} non-linearity throughout the network and do not apply the \acro{log} operator to the outputs of the initial block. Sainath et al.~\cite{sainath2015} propose an architecture that takes raw speech inputs and applies one-dimensional convolutions first in the time-domain and then the frequency-domain, designed to extract band-pass features from the waveform.
The architecture itself is a recurrent net that requires more than $2,000$ hours of training data to match the performance of models with standard features. 
Similarly, Zhu et al.~\cite{zhu2016} combine two convolutional layers with recurrent blocks in end-to-end training, requiring more than $2,400$ hours of training data for state-of-the-art results. Ghahremani et al.~\cite{Ghahremani2016} proposed a feedforward architecture based on a convolutional feature extraction layer, with the outputs of that block passed to a deep time-delay neural network (\acro{tdnn}). 
The empirical results indicate that the approach is competitive with \acro{mfcc}-based architectures on large datasets. It has not been evaluated on noisy speech and it is unclear how well it would generalize from small datasets.

Our architecture performs parametric sub-band decomposition of speech waveforms and it is most closely related to \acro{sincnet}~\cite{sincnet}, which employs three \acro{1d} convolutional layers on top of the parametric block. \acro{sincnet} is considered to be the state-of-the art model for waveform-based speech recognition.
A related architecture is \acro{sinc}$^2$\acro{net}; this links a parametric convolution block to an \acro{mlp}~\cite{edinburgh}. Recently, complex-valued parametric filters have been used to initialize a complex non-parametric convolution block in a deep network for end-to-end speech recognition~\cite{ZeghidourUKSSD18,Zeghidour18a,Zeghidour18b}. In comparison to~\cite{ZeghidourUKSSD18}, we show that our approach generalizes better on the small \acro{timit} dataset. In our experiments, we use the \acro{sincnet} architecture (code available) as a representative baselines from this class.

Recently, an approach based on concatenation of multiple convolutional blocks was proposed~{\cite{multispan}}, in which convolutional blocks capture different contexts in time and learn band-pass filters that are more expressive than classic Mel-filterbanks, which operate on a single fixed context. The approach was evaluated on both noisy and conversational speech. In our experiments, we compare to this baseline and demonstrate statistically significant improvement on the \textsc{ami-ihm} dataset ($12\%$ relative).

\begin{table*}[!htb]
	\centering\fontsize{8}{10}\selectfont  
	\caption{The table reports the average phoneme error rates (standard deviations are provided in the brackets), obtained using variational \acro{parznets 1d} and Gaussian mean field (variational) inference on the \acro{timit} dataset.}
	\begin{tabular}[t]{l|c|c|c|c|c|c}
		\multirow{5}{*}{\fontsize{6}{8}\selectfont  \textsc{sample}} & \multicolumn{4}{c|}{\fontsize{6}{8}\selectfont \textsc{vi -- log-scale uniform}} & \multicolumn{2}{c}{\fontsize{6}{8}\selectfont \textsc{vi -- scale mixture}} \\\cline{2-7}
		& \multicolumn{3}{c|}{\fontsize{6}{8}\selectfont \textsc{squared epanechnikov}} & \textsc{\fontsize{6}{8}\selectfont gauss} & \multicolumn{2}{c}{\textsc{\fontsize{6}{8}\selectfont squared epanechnikov}} \\\cline{2-7}
		& \textsc{\thead[l]{\fontsize{6}{8}\selectfont non-adaptive mel-filters\\\hline \fontsize{6}{8}\selectfont kl approximation:\\ \fontsize{6}{8}\selectfont hermite-gauss quad.}} & \textsc{\thead[l]{\fontsize{6}{8}\selectfont adaptive filters\\\hline\fontsize{6}{8}\selectfont kl approximation:\\\fontsize{6}{8}\selectfont hermite-gauss quad.}} & \textsc{\thead[l]{\fontsize{6}{8}\selectfont adaptive filters\\\hline\fontsize{6}{8}\selectfont kl approximation:\\\fontsize{6}{8}\selectfont molchanov~et~al~\cite{molchanov17a}}} &  \textsc{\thead[l]{\fontsize{6}{8}\selectfont adaptive filters\\\hline\fontsize{6}{8}\selectfont kl approximation:\\\fontsize{6}{8}\selectfont hermite-gauss quad.}} & \textsc{\thead[l]{\fontsize{6}{8}\selectfont adaptive filters\\\hline\fontsize{6}{8}\selectfont kl approximation:\\\fontsize{6}{8}\selectfont hermite-gauss quad.}} & \textsc{\thead[l]{\fontsize{6}{8}\selectfont adaptive filters\\\hline\fontsize{6}{8}\selectfont kl approximation:\\\fontsize{6}{8}\selectfont mcmc~\cite{blundell15}}} \\
		\hline
		
		\hline
		\textsc{dev} & $15.02$ ($\pm0.26$) & $14.95$ ($\pm0.14$) & $\textbf{14.77}\ $ ($\pm0.15$) & $14.83$ ($\pm0.13$)  & $15.64$ ($\pm0.11$) & $15.58$ ($\pm0.20$)\\
		\hline
		\textsc{test} & $16.95$ ($\pm0.25$) & $\textbf{16.52}\ $ ($\pm0.22$) & $16.63$ ($\pm0.23$) & $16.60$ ($\pm0.22$) & $17.41$ ($\pm0.17$) & $17.56$ ($\pm0.16$)\\
		
		\hline
		
		\hline
		
		\hline
	\end{tabular}
	\label{tbl:timit-1}
\end{table*}

\section{Experiments}
\label{sec:experiments}

We evaluate the proposed approach with a series of experiments on three different datasets: \acro{timit}~\cite{timit}, \acro{aurora4}~\cite{aurora4}, and \textsc{ami-ihm}~{\cite{renals2007recognition}}. In all the experiments\footnote{A detailed setup of our experiments along with the source code can be found in the project repository~\url{https://bitbucket.org/doglic/asr/}.}, we train a context dependent hybrid \acro{hmm} model based on frame labels (i.e., \acro{hmm} state ids) generated using a triphone model from Kaldi~\cite{kaldi} with $25$ ms frames and $10$ ms stride between the successive frames. The data splits (train/validation/test) originate from the Kaldi framework. In the pre-processing step, we assign the Kaldi frame label to the $200$ ms long segment of raw speech centered at an original Kaldi frame (keeping $10$ ms stride between the successive frames of raw speech). To be consistent with our baselines on \acro{timit}, we generate frame labels using the \acro{dnn}  
triphone model and decoding configuration from~\cite{sincnet}. For \acro{aurora4}, on the other hand, we generate frame labels using both \acro{gmm} and \acro{dnn} triphone models, relying on the default decoder configuration from Kaldi.

We describe below four sets of experiments. The first aims at demonstrating the impact of particular design choices on the effectiveness of acoustic models. More specifically, our empirical results show that: modulation filter learning can improve the performance of acoustic models in a statistically significant way (subsection A, below), the proposed approximation scheme for the Kullback--Leibler divergence term is generally more effective than previous approaches (subsection B, below), modulation filter learning moves away from the initial solution and converges to different distributions of modulation frequencies for different learning tasks (subsection C, below),
and probabilistic parametrization of the neural architecture contributes to a $7.4\%$ relative improvement in the error rate compared to the deterministic one (subsection D, below). The second set of experiments (subsections D and E, below) is aimed at showing that the proposed approach does not over-fit on what is considered to be a small dataset in speech recognition (i.e.\ \acro{timit}). Moreover, the results also indicate that a combination of variational inference and Lipschitz continuous architectures for waveform-based speech recognition such as \acro{parznets} does not require large training datasets to outperform models based on standard filterbank features. The third experiment (subsection E, below)
deals with noisy speech and shows that the proposed approach can learn an effective noise robust representation of waveform signals. The fourth and final experiment aims at demonstrating the effectiveness of the proposed approach on conversational speech (i.e., \textsc{ami-ihm}), with approximately $80$ hours of audio. The experiment shows a clear improvement over recently proposed waveform-based approaches ($12\%$ relative) and a competitive performance relative to filterbank architectures known for their effectiveness on this dataset. We also observe that variational inference consistently contributes to an improvement in the error rate compared to the deterministic models.

\subsection{Can modulation filter learning improve the effectiveness of waveform-based acoustic models?}
\label{subsec:exp-filter-learning}

The goal of this experiment is to demonstrate that filter optimization can be more effective than non-adaptive filtering of speech signals, in a way that is statistically significant. To that end, we train two neural networks with identical architectures (see Fig.~\ref{fig:arch}) using variational inference with the Kullback--Leibler divergence term approximated via the Hermite--Gauss (\acro{hg}) quadrature: \emph{i}) a neural network with non-adaptive Parzen filters initialized just as in Mel-frequency coefficients (denoted with \acro{mel-filters} in Tables~\ref{tbl:timit-1} and~\ref{tbl:aurora4}), and \emph{ii}) the joint filter and neural network learning proposed in this work (see \acro{adaptive filters, hermite-gauss quad.} under log-scale uniform prior \acro{vi} in Tables~\ref{tbl:timit-1} and~\ref{tbl:aurora4}). The Parzen filters of the latter adaptive operator are initialized exactly as the non-adaptive ones. To assess whether one method performs statistically significantly better than the other on \acro{timit}, we perform the paired Welch t-test~\cite{Welch47} based on $5$ repetitions of the experiment. The t-test indicates that filter learning is with $90\%$ confidence statistically significantly better than non-adaptive filtering. We similarly studied performance on \acro{aurora4}, which is a much larger dataset than \acro{timit} where repeated training is time consuming and expensive. However, the dataset contains $14$ different test samples and this allows us to employ the Wilcoxon signed rank test~\cite{Wilcoxon45,Demsar06} to again establish whether one approach is statistically significantly better than the other. The test indicates that filter learning is with $95\%$ confidence statistically significantly better than non-adaptive filtering on \acro{aurora4} (see e.g.\ Table~\ref{tbl:aurora4}).

\begin{table}[!t]
	\centering\fontsize{6}{8}\selectfont  
	\caption{\acro{aurora4}, word error rates obtained using different test samples.}
	\begin{tabular}[t]{l|c|c|c|c|c|c}
		 & \multicolumn{4}{c|}{\fontsize{5}{7}\selectfont  \textsc{vi -- log-scale uniform} } &  \multicolumn{2}{c}{\fontsize{5}{7}\selectfont  \textsc{vi -- scale mixture} }  \\\cline{2-7}
		& \multicolumn{3}{c|}{\textsc{\fontsize{5}{7}\selectfont  8 x cnn}} & \textsc{\fontsize{5}{7}\selectfont  10 x cnn} & \multicolumn{2}{c}{\textsc{\fontsize{5}{7}\selectfont  8 x cnn}} \\\cline{2-7}
		\textsc{\fontsize{5}{7}\selectfont adaptive filters} &   & $\checkmark$ &$\checkmark$& $\checkmark$ &$\checkmark$ &  $\checkmark$ \\
		\textsc{\fontsize{5}{7}\selectfont kl: hermite-gauss} & $\checkmark$ & $\checkmark$ &  & $\checkmark$ & $\checkmark$ &  \\
		\textsc{\fontsize{5}{7}\selectfont kl: molchanov~et~al.} &  &  & $\checkmark$ & &  &   \\
		\textsc{\fontsize{5}{7}\selectfont kl: mcmc} &  &  &  & &  & $\checkmark$  \\
		\hline
		
		\hline
		
		\multicolumn{6}{l}{\textsc{a. same microphone}}\\
		\hline
		\textsc{\fontsize{5}{7}\selectfont clean (a)} & $3.05$ & $2.88$ & $2.84$ & $2.78$ & $3.12$ & $\textbf{2.71}$  \\
		\hline
		\multicolumn{6}{l}{\textsc{b. same microphone}}  \\
		\hline
		\textsc{\fontsize{5}{7}\selectfont car} & $3.29$ & $3.34$ & $3.14$ &  $\textbf{3.10}$ & $3.29$ & $3.25$ \\
		\textsc{\fontsize{5}{7}\selectfont babble}& $4.63$ & $4.33$ & $4.84$ & $\textbf{4.26}$ & $4.54$ & $4.84$  \\
		\textsc{\fontsize{5}{7}\selectfont restaurant}& $6.46$ & $\textbf{6.00}$ & $6.18$ & $6.54$ & $6.65$ & $6.37$  \\
		\textsc{\fontsize{5}{7}\selectfont street}& $5.87$ & $5.87$ & $5.88$ & $\textbf{5.70}$ & $6.22$ & $6.16$  \\
		\textsc{\fontsize{5}{7}\selectfont airport}& $4.76$ & $4.45$ & $4.58$ & $\textbf{4.43}$ & $4.78$ & $4.61$  \\
		\textsc{\fontsize{5}{7}\selectfont train}& $6.41$ & $6.33$ & $\textbf{6.30}$ & $6.35$  & $\textbf{6.30}$ & $ 6.35$\\
		\hline
		\textsc{\fontsize{5}{7}\selectfont average (b)} & $5.24$ & $\textbf{5.05}$ & $5.15$ & $5.06$ & $5.30$ & $5.26$  \\
		\hline
		\multicolumn{6}{l}{\textsc{c. different microphones}}\\
		\hline
		\textsc{\fontsize{5}{7}\selectfont clean (c)}& $5.90$ & $5.59$ & $6.02$ & $\textbf{5.27}$ & $6.09 $& $5.96$  \\
		\hline
		\multicolumn{6}{l}{\textsc{d. different microphones}}\\
		\hline
		\textsc{\fontsize{5}{7}\selectfont car}& $9.79$ & $9.30$ & $9.36$ & $\textbf{9.10}$ & $9.84$ & $10.14$  \\
		\textsc{\fontsize{5}{7}\selectfont babble}& $15.84$ & $15.41$ & $16.01$ & $\textbf{14.78}$  & $16.07$& $16.16$ \\
		\textsc{\fontsize{5}{7}\selectfont restaurant}& $20.08$ & $20.77$ & $21.39$ & $\textbf{19.56}$ & $21.15$ & $21.24$  \\
		\textsc{\fontsize{5}{7}\selectfont street}& $17.31$ & $\textbf{16.80}$ & $17.71$ &  $17.28$ & $17.65$ & $18.61$ \\
		\textsc{airport}& $14.70$ & $13.88$ & $14.65$ & $\textbf{13.30}$ & $14.70$ & $14.94$  \\
		\textsc{\fontsize{5}{7}\selectfont train}& $17.43$ & $\textbf{16.99}$ & $17.49$ & $17.07$ & $17.64$ &  $17.90$  \\
		\hline
		\textsc{\fontsize{5}{7}\selectfont average (d) }& $15.86$ & $15.53$ & $16.10$ & $\textbf{15.18}$ & $16.18$ & $16.50$  \\
		\hline
		
		\hline
		\textsc{\fontsize{5}{7}\selectfont average (all)}& $9.68$ & $9.42$ & $9.74$ & $\textbf{9.25}$ & $9.86$ & $9.95$  \\
		\hline
		
		\hline
	\end{tabular}
	\label{tbl:aurora4}
\end{table}

\subsection{How effective is the Gauss--Hermite approximation scheme?}

Having established that modulation filter learning can be significantly better than static filtering, we proceed to show that Hermite--Gauss quadrature is an effective scheme for the approximation of the Kullback--Leibler divergence term acting as a regularizer in variational inference. In particular, we compare the effectiveness of neural networks learned via variational inference and existing strategies for approximation of the Kullback--Leibler divergence term, defined using the log-scale uniform~\cite{molchanov17a} and scale mixture priors~\cite{blundell15}. Table~\ref{tbl:timit-1} (see \acro{squared epanechnikov} modulation filters, \acro{test} sample) provides the results on \acro{timit} and shows that the approximation based on the Hermite--Gauss quadrature (see \acro{hermite-gauss quad.} columns) is on average better than existing approximation schemes (see \acro{molchanov et al.} and \acro{mcmc} columns). However, the Welch t-test does not show a statistically significant improvement of the Hermite--Gauss quadrature over the alternatives on this dataset. Table~\ref{tbl:aurora4} summarizes our results on \acro{aurora4} and demonstrates a significant improvement over the baselines when using the Hermite--Gauss quadrature to approximate the Kullback--Leibler divergence term. More specifically, the Wilcoxon signed rank test in the case of log-scale uniform prior shows that the approximation based on the Hermite--Gauss quadrature is with $95\%$ confidence statistically significantly better than the state-of-the-art approximation proposed in~\cite{molchanov17a}.

\subsection{Do modulation frequencies move away from the initial solution and converge to different distributions for different learning tasks?}

The goal of this experiment is to demonstrate that the optimization of modulation filters changes the initial distribution of modulation frequencies and bandwidths. Fig.~\ref{fit:filters} provides a comparison of kernel density estimators for modulation frequencies and filter bandwidths. From the figure, it is evident that the initial and optimized distributions are quite different for filter bandwidths on both datasets. Moreover, there is an interesting difference between the distributions of modulation frequencies between \acro{timit} and \acro{aurora4} datasets, which might be due to multi-condition training and various noise conditions characteristic to \acro{aurora4}.

\begin{figure}[!htb]
	\centering
	\input{filters.tikz}
	\caption{Comparison of the initial distributions of modulation frequencies and bandwidths to those at the end of the training process.}
	\label{fit:filters}
\end{figure}

	\begin{table}[!htb]
	\centering\fontsize{8}{10}\selectfont  
		\caption{Comparison of phoneme error rates obtained in our experiments on \acro{timit} to the ones reported for relevant feedforward nets.}
	\begin{tabular}[t]{l|c|c}
		\textsc{method} & \textsc{avg} & \textsc{min}\\
		\hline
		
		\hline
		\multicolumn{3}{l}{\textsc{a. raw speech baselines (optimized filters)}}\\
		\hline
		\textsc{variational parznets} &  $\textbf{16.5}$ & $\textbf{16.2}$ \\
		\textsc{deterministic parznets} & $17.7$ & $17.5$\\
		\textsc{sincnet}~\cite{sincnet,ravanelli2018pytorchkaldi} & $17.5$ & $17.2$\\
		\textsc{sinc}$^2$\textsc{net}~\citep{edinburgh} & $-$ & $16.9$\\
		\textsc{end-to-end cnn}~\cite{ZeghidourUKSSD18}  & $-$ &$18.0$\\
		\textsc{raw speech cnn}~\cite{sincnet}  & $18.3$ &$18.1$\\
		\hline
		\multicolumn{3}{l}{\textsc{b. standard features (non-adaptive filters)}}\\
		\hline
		\textsc{fmllr + mlp} & $16.9$ & $16.7$\\
		\textsc{mfcc + mlp}~\cite{Ravanelli2016BatchnormalizedJT} & $18.1$ & $17.8$\\
		\textsc{multi-res dss + cnn \& mlp}~\cite{dssnn} & $-$& $17.4$ \\
		\hline
		
		\hline
	\end{tabular}
	\label{tbl-timit-ref}\vspace{-0.5ex}
	\end{table}

\subsection{How does the approach fare relative to state-of-the-art feedforward models on \textsc{timit}?}
\label{subsec:exp-timit}

Table~\ref{tbl-timit-ref} summarizes our empirical results in comparison to state-of-the-art feedforward architectures on \acro{timit}.  
In addition to the lowest obtained error rate (denoted with \acro{min}), we also report the average result over $5$ simulations. A comparison to previously reported results for waveform-based speech recognition indicates that our approach performs the best on average on this task. Moreover, this is the first such approach that outperforms all the feedforward architectures built on top of standard non-adaptive features. 
Our results also show that variational inference contributes to a $7.4\%$ relative improvement on this dataset over a 
deterministic network with identical architecture (see \acro{deterministic parznets} in Table~\ref{tbl-timit-ref}).
We note here that recent work has reported lower error rates on \acro{timit} using recurrent nets and statically extracted features. In particular,~\cite{pytorchkaldi} reports the following error rates for gated recurrent units (\acro{gru}): \acro{li-gru} $15.8\%$ and \acro{li-gru fmllr} $14.8\%$. In the waveform domain with low-resources (i.e., small datasets such as \acro{timit}) recurrent nets perform worse than feedforward models. In particular, our best result on this dataset with recurrent nets in the waveform domain was $18.8\%$, which is significantly worse than the best observed result with \acro{parznets} (i.e., $16.2\%$). The good performance of models based on \acro{fmllr} features should not come as a surprise, because that feature extraction technique performs speaker and domain adaptation as well.  
Our future work will explore recurrent architectures in the waveform-domain, combined with regularization mechanisms provided by variational inference.

	\begin{table}[!t]
	\centering\fontsize{7}{9}\selectfont  
	\caption{Word error rates obtained on \acro{aurora4} using multi-condition training and input/context frames of $200$ ms (\textsc{a}: clean speech with same microphone, \textsc{b}: noisy speech with same microphone, \textsc{c}: clean speech with different microphones, \textsc{d}: noisy speech with different microphones).}
	\begin{tabular}[t]{l|c|c|c|c|c}
		\textsc{method} & \textsc{a} & \textsc{b} & \textsc{c} & \textsc{d} & \textsc{avg}\\
		\hline
		
		\hline
		\multicolumn{6}{l}{\textsc{a. raw speech \& var. baselines (optimized filters)}}\\
		\hline
		\multicolumn{6}{l}{$\qquad$\textsc{dnn alignments}}\\
		\hline
		\textsc{var. parznets (10 x cnn1d)} &  $2.22$ & $4.50$ & $4.71$ & $14.72$ & $\textbf{8.73}$ \\
		\textsc{det. parznets (10 x cnn1d)} & $2.35$ & $4.73$ & $4.86$ & $15.48$ & $9.17$\\
		\textsc{var. parznets (8 x cnn1d)} &  $2.15$ & $4.50$ & $5.28$ & $15.07$ & $8.92$ \\
		\textsc{det. parznets (8 x cnn1d)} & $2.24$ & $4.61$ & $5.75$ & $15.48$ & $9.18$\\
		\hline
		\multicolumn{6}{l}{$\qquad$\textsc{gmm alignments}}\\
		\hline
		\textsc{var. parznets (10 x cnn1d)} &  $2.78$ & $5.06$ & $5.27$ & $15.18$ & $9.25$ \\
		\textsc{var. parznets (8 x cnn1d)} &  $2.88$ & $5.05$ & $5.59$ & $15.53$ & $9.42$ \\
		\textsc{sincnet}~\cite{ravanelli2018pytorchkaldi} & $3.42$ & $6.33$ & $6.13$ & $16.99$ & $10.68$\\
		\textsc{cvae feats + mlp}~\cite{Agrawal19a,Agrawal19b} & $3.50$ & $7.40$ & $6.90$ & $17.10$ & $11.20$\\
		\hline
		\multicolumn{6}{l}{\textsc{b. standard features (non-adaptive filters)}}\\
		\hline
		\textsc{fbank + vd10 x cnn2d}~\cite{vdcnn} & $4.13$ & $6.62$ & $5.92$ & $14.53$ & $9.78$ \\
		\textsc{fbank + vd8 x cnn2d}~\cite{vdcnn} & $3.72$ & $6.57$ & $5.83$ & $14.79$ & $9.84$ \\
		\textsc{fmllr + mlp} & $3.34$ & $6.27$ & $5.74$ & $16.04$ & $10.21$ \\
		\textsc{mfcc + mlp} & $4.28$ & $7.44$ & $8.73$ & $18.71$ & $12.14$\\
		\textsc{dss (utt. norm.) + junct. net} & $3.05$ & $5.82$ & $6.11$ & $15.94$ & $9.98$ \\
		\textsc{dss (w/o norm.) + junct. net} & $4.09$ & $6.35$ & $8.24$ & $19.07$ & $11.78$ \\
		\hline
		
		\hline
	\end{tabular}
	\label{tbl-aurora4-ref}\vspace{-1ex}
\end{table} 

\subsection{How does the approach fare relative to state-of-the-art feedforward models on \textsc{aurora4}?}
\label{subsec:exp-aurora4}

\acro{aurora4} is a medium vocabulary task based on clean speech from the Wall Street Journal (\acro{wsj0}) corpus~\cite{wsj0}. The clean speech was corrupted by six different noise types at different \acro{snr}s. The test sets consist of noise corrupted utterances recorded by a primary and a secondary microphone. 
In Table~\ref{tbl-aurora4-ref} we provide a summary of our results on this dataset relative to state-of-the-art feedforward architectures. 
The first experiment compares our approach (\textsc{8 x cnn1d}) 
to the state-of-the-art architecture for waveform-based speech recognition~\mbox{\cite[][\acro{sincnet}]{sincnet}} and shows a statistically significant~\mbox{\cite[][Wilcoxon test, $95\%$ confidence]{Demsar06,Wilcoxon45}} improvement over that baseline. We also compare to a recent approach for modulation filter-learning using encoder-decoder architecture and variational inference~\mbox{\cite{Agrawal19a,Agrawal19b}}. The results again show (with $95\%$ confidence) that the proposed approach is statistically significantly better than the baseline from~\mbox{\cite{Agrawal19a,Agrawal19b}}. Following this, we compare our results to the error rates reported in~\cite{vdcnn} for $8$ and $10$-layer deep \acro{2d} convolutional networks (\acro{vdcnn2d}) based on statically extracted features using $200$ ms long raw-speech segments (i.e., $17$ \acro{fbank} frames). This might be an unfair comparison to our approach, because we use the less expressive \acro{1d} convolutions in our architecture. Still, the results indicate that the variational \acro{parznets} architecture with $8$ convolutional layers 
outperforms significantly the network with $10$ \acro{cnn2d} layers from~\cite{vdcnn}.  
Furthermore, we extend our architecture (Fig.~\ref{fig:arch}) to $10$ convolutional layers by employing time-padding in \acro{1d} convolutions to allow for another double convolutional block. The results indicate a further improvement in accuracy as a result of this modification.
 
Another particularly interesting observation is that the gains of our approach over noisy samples do not come as a result of performance degradation on clean speech. We note here that~\cite{vdcnn} reports a slightly better error rate with \acro{2d} convolutions and \acro{fbank} features when the context size is increased to $250$ ms (i.e.\ $21$ frames), in combination with time and frequency padding (\acro{wer} $8.81\%$). Table~\ref{tbl-aurora4-ref} (see \acro{dnn alignments}) shows that our approach provides a competitive error rate (\acro{wer} $8.73\%$) with smaller context size (i.e., $200$ ms) and less expressive time-padded \acro{1d} convolutions. Moreover, a recent approach based on multi-octave convolutions and $15$ such convolutional layers has achieved the error rate of $8.31\%$ on this dataset~\cite{moctcnn}. 

In a follow up work~{\cite{oglic20}}, we have investigated \textsc{parznets} with \textsc{2d} convolutional operators coupled with Bernoulli dropout layers (i.e.\ a special case of stochastic neural networks with variance parameter fixed over an entire network layer). This approach achieved a word error rate of $7.80\%$, which is the best reported number on this dataset for waveform-based speech recognition. Here, it is important to note that \textsc{1d parznets} baselines from~{\cite{oglic20}} employ time-padded convolutions and an extra fully connected layer in the \textsc{mlp} block compared to the neural architecture considered in this paper.

In addition to waveform-based baselines and deep convolutional networks operating with standard non-adaptive features, we have also compared our approach to a junction network~\cite{dssnn} coupled with first and second order deep scattering spectrum features (see Table~\ref{tbl-aurora4-ref}, \acro{dss + junc. net}). The latter is a non-adaptive wavelet-based feature extraction technique~\cite{mallat14} that generates features of different orders, with the first order coefficients approximately equal to \acro{mfcc}, and higher order coefficients recovering information lost at lower levels.  
Our experiments demonstrate that \acro{parznets} can outperform this approach, even when it is supplied with utterance level normalization. In parallel with this work, we have also proposed deep scattering power spectrum features~\cite{oglic20b}. The latter non-adaptive feature extraction technique coupled with the junction neural architecture and utterance level normalization performs on par with \acro{parznets} (\acro{wer} $8.83\%$). 
Given that deep scattering spectrum recovers information lost at lower levels, we hypothesize that this might be yet another indication for the relevance of information loss (characteristic to standard filterbank features) for robustness to standard noise corruptions.

\begin{table}[!t]
	\centering\fontsize{8}{10}\selectfont  
	\caption{The word error rates obtained on dev and eval sets of \textsc{ami-ihm} with various input features and neural architectures. We did not use any data augmentation techniques or \emph{i-vectors} in the experiments. Following the original Kaldi recipe, a \textsc{3-gram} language model built from the \acro{ami} and \acro{fisher} data was adopted. Some of the related baselines relied on a contextually more expressive \textsc{4-gram} language model, and were compiled solely using the \acro{ami} data. The column \acro{size} refers to an approximate number of differentiable parameters in the respective neural architectures.}
	\begin{tabular}[t]{l|c|c|c|r}
		\textsc{architecture} & \textsc{dev} & \textsc{eval} & \textsc{lm} & \textsc{size}\\
		\hline
		
		\hline
		\multicolumn{5}{l}{\textsc{a. raw speech baselines (adaptive filters)}}\\
		\hline
		\textsc{var. parznets (10 x cnn1d)} & $\textbf{24.7}$ & $\textbf{25.7}$ & \textsc{\fontsize{7}{10}\selectfont 3-gram} & $17.4$~M \\
		\textsc{det. parznets (10 x cnn1d)} & $25.0$ & $26.4$ & \textsc{\fontsize{7}{10}\selectfont 3-gram} & $8.7$~M \\
		\textsc{var. parznets (8 x cnn1d)} &  $25.1$ &  $26.4$ & \textsc{\fontsize{7}{10}\selectfont 3-gram} & $19.0$~M\\
		\textsc{det. parznets (8 x cnn1d)} &  $25.9$ &  $27.7$ & \textsc{\fontsize{7}{10}\selectfont 3-gram} & $9.5$~M\\
		\textsc{sincnet}~\cite{ami-sincnet} & $28.0$ & $30.2$ & \textsc{\fontsize{7}{10}\selectfont 3-gram} & $9.0$~M\\
		\textsc{multi-span-dnn}~\cite{multispan} & $27.2$ & $29.3$ & \textsc{\fontsize{7}{10}\selectfont 4-gram} & $4.7$~M\\
		\hline
		\multicolumn{5}{l}{\textsc{b. standard features (non-adaptive filters)}}\\
		\hline
		\textsc{fbank-mlp}~\cite{multispan} & $28.3$ & $31.1$ & \textsc{\fontsize{7}{10}\selectfont 4-gram} & $3.0$~M\\
		\textsc{fmllr-mlp} & $26.0$ & $27.1$ & \textsc{\fontsize{7}{10}\selectfont 3-gram} & $8.5$~M\\
		\textsc{tdnn}~\cite{Peddinti2015ATD} & $25.3$ & $26.0$ & \textsc{\fontsize{7}{10}\selectfont 3-gram} & $7.7$~M\\
		\hline
		
		\hline
	\end{tabular}
	\label{tbl:ami}\vspace{-1ex}
\end{table}

\subsection{How does the approach fare relative to state-of-the-art raw waveform baselines on \textsc{ami-ihm}?}

\textsc{ami-ihm} is a conversational speech dataset with approximately $80$ hours of speech, recorded using individual headset microphones. The alignments were generated using the Kaldi recipe configured with $3,984$ \textsc{hmm} state ids. Table~{\ref{tbl:ami}} summarizes our result relative to relevant baselines on this dataset. 

We have first compared variational \textsc{parznets} with $8$ and $10$ convolutional layers to two recently published raw waveform approaches for this task: multi-span raw waveform models~{\cite{multispan}} and \textsc{sincnet}~{\cite{ami-sincnet}}. Our empirical results show that variational \textsc{parznets} advance the state-of-the-art in waveform-based acoustic models on this dataset, with over 12\% relative improvement in \textsc{wer} compared to these baselines. 
Moreover, we also compare to deep time-delay neural networks~\cite[\textsc{tdnn},][]{Peddinti2015ATD} based on \textsc{fbank} features (considered to be the state-of-the-art feedforward model on this dataset) and show that variational inference coupled with a \textsc{parznets} architecture (\textsc{10 x cnn1d}) can outperform that approach. We note here that we have not used any data augmentation or $\mathrm{i}$-$\mathrm{vectors}$ in our experiments, both techniques which could be combined with our approach and are known to further improve the accuracy on this dataset.

Finally we note that our experiments were conducted using a cross entropy (\textsc{ce}) loss function.  Experiments using a sequence discriminative approach (\textsc{lf-mmi}) indicate that the \textsc{wer}s could be further lowered -- Povey et al~{\cite{povey16}} indicated that using \textsc{lf-mmi} in place of \textsc{ce} can reduce the error rate by about $10\%$ relative, 
and more recently a regularised \textsc{lf-mmi} training with significant data augmentation (6x) resulted in a \textsc{wer} of $18.0\%$ on this task~{\cite{kanda2018lattice}}.

\section{Discussion}
\label{sec:discussion}

This section discusses some of the model choices and assumptions made by our approach. We also address the empirical evaluation and the ablation studies that we have performed to discern the effects of individual components of our approach.

The proposed approach employs a variational family of univariate Gaussian distributions, known as the mean field assumption.  While such a variational family might be perceived as overly simplistic, recent work~\cite{Farquhar2019} has demonstrated that deep Bayesian/stochastic neural networks equipped with univariate Gaussian distributions can build complex covariance structures through multiple layers. The proposed neural architecture combines $8$-$10$ convolutional layers with multi-layer perceptrons and, thus, provides sufficient depth.

The main reason for selecting the probabilistic formulation of the neural architecture is to enforce the bounded weight property across the network and, thus, allow for learning of a robust acoustic model with a good Lipschitz constant. Variational inference alone, however, is not necessary to guarantee bounded weights across the neural network. That property will depend on the choice of prior function and holds for the Gaussian and scale-mixture priors. For the log-scale uniform prior, Section~\ref{subsec:svi-priors} provides a brief discussion and reference to relevant related work where it has been demonstrated that learning with that prior amounts to performing penalized log-likelihood estimation, with the Kullback--Leibler divergence term responsible for regularization. Moreover, the dropout regularization technique~\cite{SrivastavaHKSS14} can be theoretically justified as variational inference with the log-scale uniform prior. Hence, the proposed approach exploits means to generalize the most frequently used regularization method for neural networks. Our experiments, however, demonstrate that Gaussian and scale-mixture priors do not provide a good inductive bias for waveform-based acoustic models. Future work will explore the potential of more complex prior functions.

In our ablation study (see Section~\ref{subsec:exp-filter-learning}), we have compared the effectiveness of two identical architectures, one with modulation filter learning and the other with a priori fixed or non-adaptive filters. Our empirical results indicate that filter learning can be statistically significantly more effective than non-adaptive filters. Moreover, Fig.~\ref{fit:filters} shows that modulation frequencies converge to different distributions for different learning tasks and this is yet another indication that non-adaptive filters do not provide a universally optimal inductive bias. When evaluating the effectiveness of the approach relative to standard features such as \acro{fbank} and \acro{mfcc} one should bear in mind that different feature representations require different neural architectures and inductive biases for state-of-the-art results. Moreover, there is a significant difference in the dimension of the inputs to neural networks operating with raw waveforms on the one hand and \acro{fbank} or \acro{mfcc} features on the other, because of the aggressive compression performed by the latter. In addition to this, neural networks operating with statically extracted features typically encode more information into the training process by means of speaker and utterance level normalizations, which are known to improve the performance of acoustic models. To make the comparison between different feature representations fair, we have decided to compare our approach to state-of-the-art feedforward architectures operating in low-dimensional feature spaces. Tables~\ref{tbl-timit-ref} and~\ref{tbl-aurora4-ref} indicate a competitive performance of our approach relative to state-of-the-art baselines based on statically extracted features. Moreover, the approach is more effective than any other waveform-based approach and in this sense advances the state-of-the-art.

We conclude with a reference to the selected filterbank, which is simple to implement and provides the band-pass properties required to establish the Lipschitz continuity of the waveform-based operator mapping. The parametrization allows for an independent control over bandwidth and modulation frequency, which is sufficient to emulate a sub-band decomposition as in  standard statically extracted features. In Table~\ref{tbl-timit-ref} (see \acro{raw speech cnn} and \acro{end-to-end cnn}), we have compared to deep convolutional networks that employ modulation filter learning with a standard non-parametric convolutional layer. Our empirical results indicate that the strong inductive bias encoded via a parametric convolutional layer can lead to more effective acoustic models, especially in low-resource settings.

\section*{Conclusion}
\label{sec:conclusion}

We have outlined a principled framework for learning effective waveform-based acoustic models. The framework combines stochastic variational inference with a Lipschitz continuous architecture/operator that learns to gradually extract relevant features. The approach operates directly in the waveform domain to avoid potential information loss inherent to standard feature extraction techniques such as \acro{mfcc} and \acro{fbank} coefficients. In our experiments, the approach outperforms recently proposed architectures for waveform-based speech recognition (e.g., \acro{sincnet}) as well as a relevant deep convolutional networks for learning of robust acoustic models using \acro{fbank} features~\cite{vdcnn}. Moreover, our empirical results show that the proposed approach allows for learning of effective acoustic models using relatively small datasets. Our future work will explore the potential of stochastic recurrent architectures operating in the waveform domain as well as different priors that could further improve the inductive bias via the regularization mechanism provided by the Kullback--Leibler divergence term.
To the best of our knowledge, this is the first time that a variational approach has achieved results competitive with state-of-the-art on continuous speech recognition.

\appendices

\section{Training Procedure}
\label{app:exps}

In all the experiments, the minibatch size was set to $256$ samples. For our deterministically trained baselines, we tried two batch sizes, $256$ and $128$, and report the better of the two error rates in our tables. 
The feature extraction parameters involving Parzen filters and convolution layers that synthesize features across filtered signals were optimized using the \acro{rmsprop} algorithm~\cite{Tieleman2012} with initial learning rate $0.0008$. The fully connected blocks were optimized using the standard stochastic gradient descent with initial learning rate $0.08$. This combination of optimization algorithms (with all the blocks trained jointly) has been found to be the most effective, confirming the findings in~\cite{sincnet}. Alternative algorithms that were tried and found to be too aggressive (providing lower training error but worse generalization) were \acro{adam}~\cite{adam}, \acro{nadam}~\cite{nadam} and \acro{sgd} with momentum. Here, it is important to note that the conclusions of our ablation studies were consistent under changes to the optimization algorithm.
The learning rates were decreased by a factor of $\nicefrac{1}{2}$ if at the end of an epoch the relative improvement in validation error was below a specified threshold (e.g., $0.1\%$ for the frame classification error). Moreover, if the validation error degraded then training was continued using the model from the previous epoch (with learning rates again decreased by a factor $\nicefrac{1}{2}$). We terminate the training process after at most $25$ epochs or upon observing no improvement in the validation error for $3$ successive epochs.

In previous work~\cite{Sonderby16,molchanov17a} it was established that, for some priors, stochastic variational inference tends to trim too many parameters in the early stages of the training. To address this issue it was proposed~\cite{Sonderby16} to rescale the Kullback--Leibler regularization term with a hyperparameter $\rho_t$ such that $\rho_{t+1}=\min\{1,\rho_t+c\}$ with $\rho_0=0$ and some constant $0<c<1$ (e.g., $c=0.2$), and where $t$ denotes the epoch number (starting from $t=0$). We followed this heuristic in all of our experiments and observed an improvement in accuracy. Following the findings in~\cite{may19}, we also considered two notions of validation error in our preliminary experiments (omitted here for brevity) 
classification error of raw-speech frames and entropy regularized log-loss~\cite{may19}. The empirical results from~\cite{may19} indicate that the latter error correlates better with the token error rate of continuous speech recognition. Indeed, our best results were obtained using the entropy regularized log-loss as the validation objective. Just as in~\cite{blundell15}, we observed an improvement in accuracy for models trained using batch-specific importance weighting of the divergence term. However, the cooling schedule proposed in~\cite[][Eq. $9$]{blundell15} was too strong for the datasets considered here because of the much larger number of batches. To address this, we replaced base $2$ proposed in~\cite{blundell15} with another constant, computed such that the minimal importance weight is equal to machine precision for $32$-bit floating point arithmetics. In addition to these findings we also observed that in some cases the optimization (overly) focuses on the maximization of the log-likelihood for the already correctly classified speech frames. To mitigate this and ensure that the optimization objective is always bounded, we transformed softmax probabilities (denoted with $p$) by
\begin{align}
\log p \quad \rightarrow \quad \log \brackets{\brackets{1 - 2 \kappa} p + \kappa} \ ,
\end{align}
with $\kappa$ denoting a small jitter constant (e.g.\ $\kappa = 10^{-8}$).

\section*{Acknowledgments}

This work was supported in part by EPSRC grant EP/R012067/1. The authors would also like to thank Steve Renals and Peter Bell for valuable discussions and comments that have improved the manuscript. The Kaldi alignments were generated with the help of Erfan Loweimi and Neethu Joy.

\ifCLASSOPTIONcaptionsoff
  \newpage
\fi



{
\small
\bibliographystyle{IEEEtran}
\bibliography{parznets}
}
\end{document}